%% file: arxiv.tex
\documentclass[10pt,twocolumn,letterpaper]{article}

\usepackage{cvpr}              
\input{preamble}
\definecolor{cvprblue}{rgb}{0.21,0.49,0.74}
\usepackage[pagebackref,breaklinks,colorlinks,allcolors=cvprblue]{hyperref}
\usepackage{mathtools}
\usepackage{amsmath}
\newtheorem{proposition}[theorem]{Proposition}
\def\confName{CVPR}
\def\confYear{2026}

\title{
Resolving Endpoint Underfitting in Diffusion Bridges via Noise Alignment
}

\author{
{Yurong Gao$^{1\dagger}$} \quad
{Zicheng Zhang$^{2\dagger}$} \quad
{Congying Han$^{1}$\thanks{Corresponding author}} \quad
{Tiande Guo$^{1}$} \quad
{Xinmin Qiu$^{1}$} \\
\normalsize \textsuperscript{1}University of Chinese Academy of Sciences \quad
\textsuperscript{2}JD.com \\
\small $^{\dagger}$Equal contribution.
}

\begin{document}
\maketitle
\input{sec/0_abstract}    
\input{sec/1_intro}

\input{sec/relatedwork}

\input{sec/preliminaries}

\input{sec/method}

\input{sec/experiment}

\input{sec/Conclusion}

{
    \small
    \bibliographystyle{ieeenat_fullname}
    \bibliography{arxiv}
}
\input{sec/X_suppl}


\end{document}

%% file: sec/0_abstract.tex
\begin{abstract}
Diffusion bridge models offer a powerful framework for connecting two data distributions, such as in image restoration and translation. Many existing methods learn this bridge by mimicking the score-matching formulation of standard diffusion models. In this work, we find that this way leads to an anomalous underfitting phenomenon near the target endpoint,  as the process approaches the target distribution ($t \to 0$). 
This underfitting, characterized by significant drift in the predicted variance and direction, results from an excessively large discrepancy in noise levels between the network's input and its regression target.
To resolve this issue, we propose the 
Noise-Aligned Diffusion Bridge (NADB). 
Our approach reformulates the diffusion bridge by first employing a mean network to provide a cleaner conditional target, and then introducing a novel, noise-aligned mapping relationship. This new formulation resolves the noise mismatch and corrects the underfitting near the target endpoint. 
Experimental validation across multiple image restoration and image translation tasks demonstrates the effectiveness of our approach. Code is available at \url{https://github.com/gyr02/NADB}.

\end{abstract}

%% file: sec/1_intro.tex
\section{Introduction}
\label{sec:intro}
\begin{figure}[t!]
    \centering
        \includegraphics[width=0.47\textwidth]{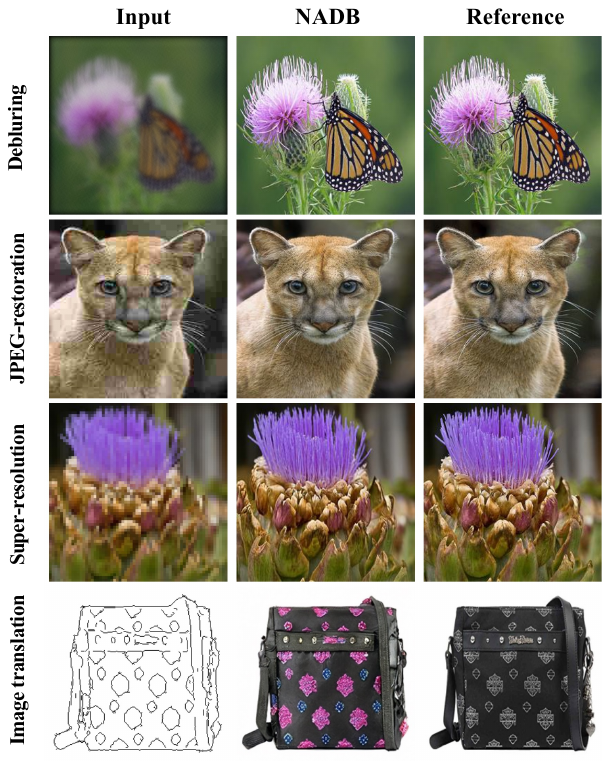}
        \caption{Sampled results from our proposed Noise-Aligned Diffusion Bridge (NADB) for restoring various
degraded images.}
        \label{fig:zhutu}
\end{figure}
Image restoration~\cite{rudin1992nonlinear,freeman2002example,buades2005non,dabov2007image,krishnan2009fast} has recently seen a paradigm shift dominated by generative models~\cite{ledig2017photo,wang2018esrgan,ho2020denoising,kawar2022denoising,wang2022zero}. Among these, diffusion models~\cite{ho2020denoising,kawar2022denoising,wang2022zero} have emerged as the state-of-the-art framework, typically operating by learning a score function to reverse a stochastic noising process. In the context of restoration, this reverse process is conditionally guided by the low-quality input image~\cite{kawar2022denoising,wang2022zero}. However, this standard formulation is arguably circuitous, as it often requires mapping a low-quality input all the way to pure noise before reconstructing the target. A fundamentally different and more natural perspective is offered by diffusion bridge models~\cite{de2021diffusion,chen2022likelihood,liu20232,Wang2025ResidualDB}. Rather than treating distributions in isolation, diffusion bridges learn a direct stochastic trajectory from the low-quality distribution to the high-quality one. This approach avoids the destructive-reconstructive cycle and offers a more principled formulation for mapping between the degraded and clean image manifolds, aligning perfectly with the core objective of restoration.

Within this domain, I2SB~\cite{liu20232} is a pioneering method that operationalizes the bridge concept by mimicking the standard diffusion framework. It learns the score function of the bridge process, enabling the network to predict the target distribution, and its sampler iteratively refines the low-quality image, akin to a standard denoising process. This mimicry, however, raises a critical question: is the standard diffusion training scheme optimal for diffusion bridge models? Our analysis of the I2SB network's output across the entire time horizon reveals a significant flaw: an anomalous underfitting phenomenon that emerges near the $t \to 0$ endpoint (referred to as target endpoint ), as evidenced in Fig.~\ref{fig:bluruni}. 
This underfitting manifests as a severe distortion in the predicted variance and erroneous directional predictions. This mismatch suggests an inherent flaw in the modeling formulation at this critical endpoint, directly leading to the network's inability to achieve an adequate fit in this region.


We identify the root cause of this failure as a fundamental mismatch in the noise scheduling trends between the input and the target. As shown in Fig.~\ref{fig:noise}, the noise amplitude of the network input follows a bridge-like profile, whereas the noise amplitude of the training objective is strictly monotonic. This irreconcilable trend inconsistency is precisely what creates the ill-conditioned learning task, as it forces the network to map a nearly deterministic input to a highly stochastic target at the target endpoint.
Consequently, this creates a critical paradox: at the final refinement stages, when the model should be adding high-fidelity details to converge to the target manifold, its inherent flaw prohibits it from  capping the achievable restoration quality.

To mitigate the endpoint underfitting, we analyze this issue via the stochastic interpolant and decouple it into magnitude and direction failures. We then propose an improved paradigm called Noise-Aligned Diffusion Bridge (NADB), which introduces two key innovations to solve them. First, to resolve the magnitude failure, we reformulate the bridge interpolant, creating a new well-posed learning scheme. Second, to correct the erroneous direction errors, we employ a mean network in a preliminary step. This network constructs the bridge between two distributions that are in closer proximity and alleviating the learning burden on the restoration network.
We validate NADB on a range of challenging image restoration tasks; see Fig.~\ref{fig:zhutu}. Extensive experiments demonstrate that our paradigm significantly outperforms pioneering diffusion bridge models. Notably, our approach achieves superior perceptual quality and reconstruction fidelity, proving the efficacy of our solution. Our contributions include:
\begin{itemize}
\item We identify and analyze a critical endpoint underfitting in existing diffusion bridge models, tracing its root cause to a fundamental noise level mismatch, which manifests as a dual failure in prediction magnitude and direction. 
\item We propose a Noise-Aligned Diffusion Bridge (NADB) that resolves this endpoint underfitting issue. NADB introduces a novel magnitude-aligned stochastic interpolant and a mean network to correct the directional error.
\item We provide extensive experimental validation on ImageNet, demonstrating that our method of resolving these dual errors translates to state-of-the-art performance, significantly outperforming existing diffusion bridge models in both perceptual quality and fidelity.
\end{itemize}

%% file: sec/relatedwork.tex
\begin{figure}[t!]
    \centering
        \includegraphics[width=0.47\textwidth]{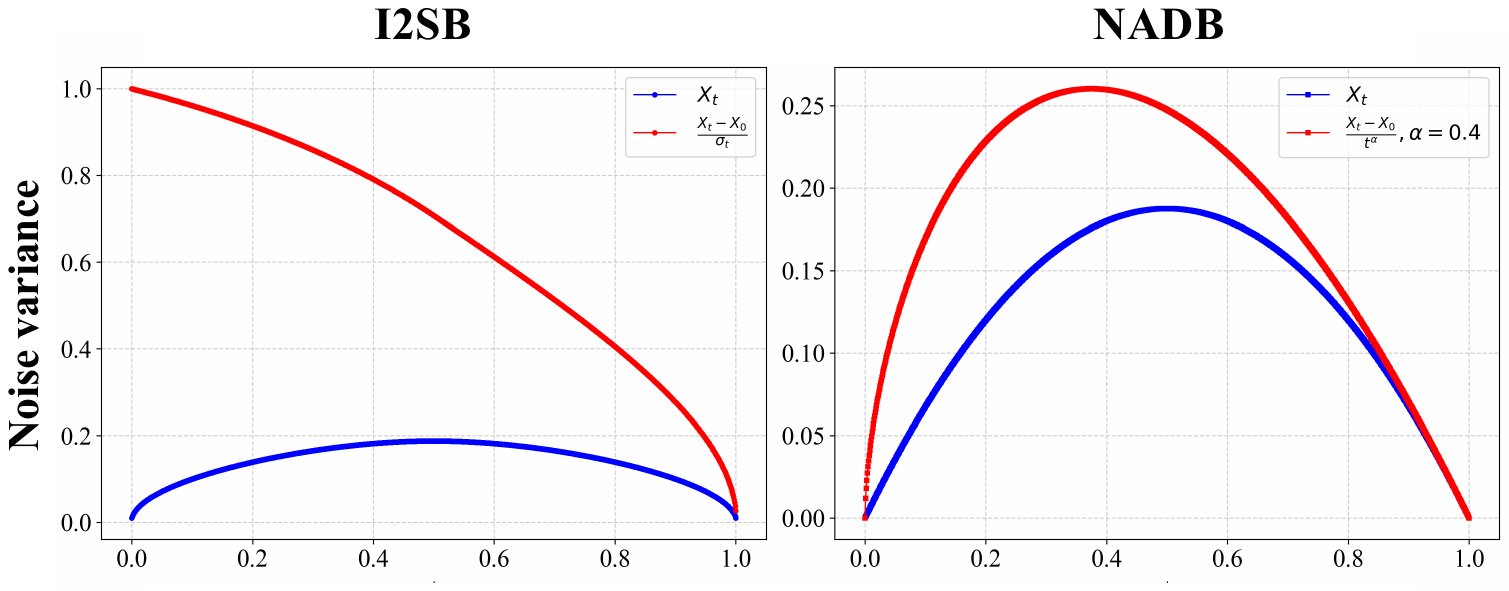}
        \caption{\textbf{Left}: Noise level mismatch in I2SB's training objective defined in Eq.~(\ref{eq:i2sb_loss_final}). The network input $X_t$ exhibits a bridge-like noise profile, while the training  objective $\frac{X_t - X_0}{\sigma_t}$  is monotonically decreasing. This creates a severe learning challenge as $t \to 0$, forcing the network to map a deterministic input to a highly stochastic target. \textbf{Right}: Magnitude aligned noise level in NADB.}
        \label{fig:noise}
\end{figure}

\section{Related work}
\label{sec:related work}

\paragraph{Image Restoration}  aims to recover high-quality images from degraded observations. Its dominant paradigm has evolved from optimizing simple distortion metrics to generating high-fidelity, perceptually-rich results. This progression is marked by three main stages. First, Convolutional Neural Networks (CNNs) trained with pixel-level losses~\cite{dong2015image,dong2016accelerating} excelled on distortion metrics (e.g., PSNR) but produced over-smoothed results, as noted by the perception-distortion trade-off~\cite{blau2018perception,jinjin2020pipal}. Subsequently, Generative Adversarial Networks (GANs)~\cite{ledig2017photo,wang2018esrgan} improved realism using adversarial losses to generate finer textures, but often at the cost of hallucinated artifacts not faithful to the ground truth. Furthermore, their training instability and limited capacity to fully capture the data distribution, often leading to mode collapse, paved the way for more stable and powerful generative frameworks.

\paragraph{Denoising Diffusion Models} are currently the state-of-the-art generative models, achieving superior fidelity and perceptual quality. For restoration tasks, they are typically applied by conditioning the reverse process~\cite{kawar2022denoising,wang2022zero} or focusing on the posterior mean~\cite{ohayon2024posterior}. General research on diffusion models has rapidly advanced the field, including new perspectives on SDE~\cite{song2020score,karras2022elucidating}, optimized noise schedules~\cite{nichol2021improved,karras2022elucidating,peebles2023scalable}, faster samplers~\cite{song2020denoising,lu2022dpm,zhang2022fast}, and improved guidance~\cite{dhariwal2021diffusion,ho2022classifier}.

\paragraph{Diffusion Bridge Model} is an alternative to standard conditional diffusion framework~\cite{de2021diffusion,chen2022likelihood}, which learns a direct stochastic trajectory from the degraded to the clean distribution~\cite{sarkka2019applied}. Its research approach mainly involves modeling the diffusion bridge model as SDE, and then introducing a generative model to fit the score function in the stochastic differential equation~\cite{de2021diffusion,heng2025simulating,liu2022let,liu20232,ICLR2024_20e45668,wang2025implicit,song2020denoising}. Some of them construct bridge models using the h-transform from the diffusion process~\cite{liu2022let,ICLR2024_20e45668}, some completely imitate the forward and backward sampling processes of the diffusion process~\cite{liu20232}, and some focus on the corresponding acceleration algorithms~\cite{wang2025implicit,song2020denoising}. 

\paragraph{Discussion.} We observe that research on diffusion bridge is largely imitating the improvement paths of standard diffusion models. However, as we identified in \S\ref{sec:intro}, methods that directly imitate the diffusion process, such as I2SB~\cite{liu20232}, inherit fundamental flaws, namely the anomalous underfitting at the target endpoint. Instead of adopting the standard score-matching approach, we construct our bridge from the more flexible perspective of Stochastic Interpolants~\cite{albergo2023stochastic}, which enables us to redesign the mapping and resolve this critical endpoint failure.

%% file: sec/preliminaries.tex
\section{Preliminaries}
\label{sec:Preliminaries}
In this section, we introduce several key concepts and notations. We begin with the general definition of a stochastic interpolant, which provides a framework for connecting two distributions using continuous-time processes.

\begin{definition}[Stochastic Interpolant]
\label{def1}
Let $(X_0, X_1)$ be a paired sample from a joint distribution $(\rho_0, \rho_1)$, the stochastic interpolant $\{X_t\}_{t \in [0,1]}$ is defined as:
\begin{equation}
	X_t=\alpha_t X_0+\beta_t X_1+\gamma_t  Z,
  \label{eq:xt_general}
\end{equation}
where $Z \sim \mathcal{N}(0,I)$, $\alpha_t+\beta_t=1$, with boundary conditions $(\alpha_0, \beta_0, \gamma_0) = (1, 0, 0)$ and $(\alpha_1, \beta_1, \gamma_1) = (0, 1, 0)$.
\end{definition}
In context of image restoration tasks, $X_0$ represents the high-quality image, and $X_1$ is its corresponding degraded version. The variable $X_t$ thus traverses a stochastic path connecting the two image manifolds. I2SB~\cite{liu20232} is a specific instance of stochastic interpolant built for diffusion bridge.

\begin{definition}[Diffusion Schr{\"o}dinger Bridge] 
\label{def2}
Using a specific set of coefficients for Eq.~(\ref{eq:xt_general}),
I2SB~\cite{liu20232} derives a path for the tractable Schr{\"o}dinger Bridge:
\begin{equation}
	X_t=\underbrace{\frac{\bar{\sigma}^2_t}{\bar{\sigma}^2_t + \sigma^2_t} X_0+\frac{\sigma^2_t}{\bar{\sigma}^2_t + \sigma^2_t}X_1}_{\text{Mean term}}+\underbrace{\sqrt{\frac{\sigma^2_t \, \bar{\sigma}^2_t}{\bar{\sigma}^2_t + \sigma^2_t}} Z}_{\text{Stochastic term}},
	\label{eq:i2sbxt}
\end{equation}
where $\sigma^2_t=\int_0^t \beta_\tau d\tau$ and $\bar{\sigma}^2_t = \int_t^1 \beta_\tau d\tau$. As shown in Fig.~\ref{fig:c}, the network $\epsilon(X_t, t; \theta)$ is trained by mimicking the standard diffusion objective, which for this specific path simplifies to the following regression loss:
\begin{equation}
    \mathcal{L}_{\text{I2SB}} = \mathbb{E}_{t, X_0, X_1, Z}\left[ \left\| \epsilon(X_t, t; \theta) - \frac{X_t-X_0}{\sigma_t} \right\|^2 \right].
    \label{eq:i2sb_loss_final}
\end{equation}
During inference, I2SB estimates $X_{t}$ in reverse from $t=1$ to $t=0$, analogous to standard diffusion samplers.
\end{definition}

The coupling of this specific interpolant path in Eq.~(\ref{eq:i2sbxt}) and final training objective in Eq.~(\ref{eq:i2sb_loss_final}) is the crucial point for our analysis.
The training objective is forced to approximate pure gauss noise as $t \to 0$, 
precisely when the interpolant $X_t$ is converging to the clean data $X_0$. As we will investigate, this  formulation is suboptimal for building diffusion Bridge and leads to the underfitting phenomena.

%% file: sec/method.tex
\section{Methodology}


\label{sec:Method}

\begin{figure}[t!]
    \centering
        \includegraphics[width=0.47\textwidth]{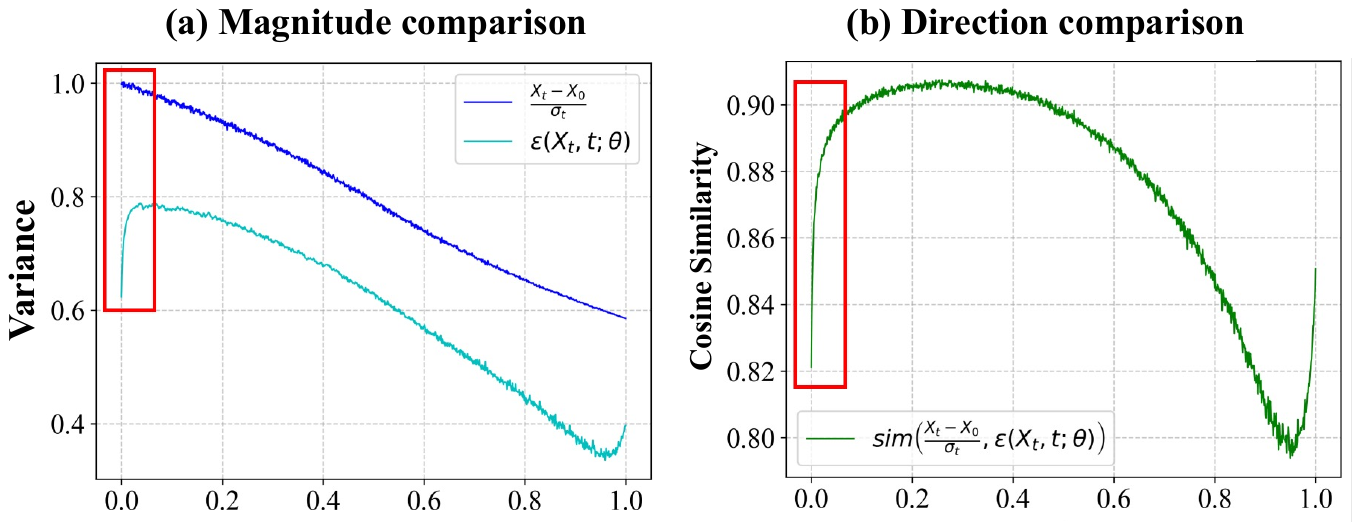}
        \caption{Empirical analysis of I2SB's endpoint underfitting. The noise mismatch shown in Fig.~\ref{fig:noise} causes the training to fail as $t \to 0$ in both magnitude (variance collapse) and direction (cosine similarity drop), confirming a severe underfitting phenomenon.}
        \label{fig:bluruni}
\end{figure}

\subsection{Noise Level Mismatch in Diffusion Bridge }
As shown in Fig.~\ref{fig:noise}, we first show that there exists an issue of noise level mismatch in diffusion bridge models, which causes underfitting problem at the target endpoint in Fig.~\ref{fig:bluruni}. 
To analyze this mismatch, we expand the training  objective  based on the definition in Eq.~(\ref{eq:i2sbxt}):
\begin{equation}
	\frac{X_t-X_0}{\sigma_t}=\underbrace{\frac{\sigma_t}{\bar{\sigma}^2_t + \sigma^2_t}(X_1-X_0)}_{\text{Mean term}}+\underbrace{\frac{\bar{\sigma}_t}{\sqrt{\bar{\sigma}^2_t + \sigma^2_t}}  Z}_{\text{Stochastic term}}.
	\label{eq:targeti2sb}
\end{equation}
While the stochastic term is analogous to the standard diffusion objective of predicting pure noise, the mean term represents a fundamental departure by introducing a deterministic signal component that the original diffusion framework lacks.
We now clearly compare the terms of the network input Eq.~(\ref{eq:i2sbxt}) and the network target Eq.~(\ref{eq:targeti2sb}). 

\paragraph{Noise Level Mismatch} As illustrated in Fig.~\ref{fig:noise}, a significant discrepancy exists at the target endpoint ($t \to 0$): For the network input ($X_t$), its noise coefficient approaches 0 as $t \to 0$, making the input become nearly deterministic $X_0$.
For the training  target, its noise coefficient approaches 1 as $t \to 0$. The target becomes highly stochastic $Z$. This mismatch imposes an ill-conditioned task on the network: it predicts random noise from a clean and deterministic input.

\paragraph{Quantitative Metrics}
To quantify the network's failure on this ill-conditioned task, we assess the difference between the network's prediction and the ground truth target along two dimensions:
\begin{itemize}
     \item \textbf{Magnitude:} We compare the Variance between the two. This measures whether the network predicts the correct amplitude of stochasticity.
    \item \textbf{Direction:} We compute the Cosine Similarity between the two. This measures whether the direction of the prediction is pointing in the correct direction.
\end{itemize}

\paragraph{Endpoint Underfitting} We show the empirical results for I2SB~\cite{liu20232} on image restoration tasks in Fig.~\ref{fig:bluruni}, using a randomly selected ImageNet sample as an example.
These plots clearly demonstrate that as $t \to 0$, the network's prediction fails on both dimensions: The predicted variance sharply collapses, failing to match the high variance of the target. The cosine similarity also precipitously drops, indicating that the prediction is pointing in an erroneous direction.
We term this dual failure Endpoint Underfitting. This flaw is critical because it occurs during the final, most crucial refinement stages of generation, directly limiting the achievable restoration quality.

This ill-conditioned mapping relationship poses a significant challenge to the network's fitting capacity in this boundary region.  
Therefore, improving the network's fitting capability near the target endpoint by redesigning the mapping relationship is imperative.  We address this by proposing a means network and a noise correspondence strategy.  Our experiments verify that the mean network effectively enhances the output directional accuracy near the target endpoint, while the noise correspondence method successfully resolves the variance discrepancy issue.

\begin{figure}[t!]
    \centering
        \includegraphics[width=0.47\textwidth]{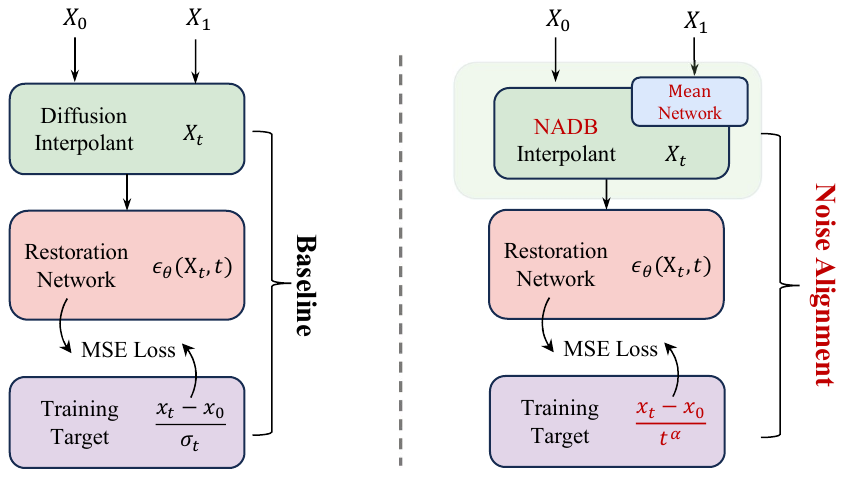}
        \caption{\textbf{Left}: The training pipeline of I2SB. It uses score matching to train the network. \textbf{Right}: The training pipeline of our proposed NADB. The process involves two key components: (1) A frozen Mean Network pre-processes the degraded image $X_1$ to estimate the posterior mean $\mathbb{E}[X_0|X_1]$, which replaces the original $X_1$ as a proxy endpoint to construct the new stochastic interpolant $X_t$. (2) The Restoration Network $\epsilon_{\theta}$ is then trained to predict our proposed magnitude-aligned target $\frac{X_t-X_0}{t^\alpha}$ from this new $X_t$.}
        \label{fig:c}
\end{figure}

\subsection{Magnitude-Aligned Diffusion Bridge}
To solve the endpoint underfitting, we redesign the stochastic interpolant path and training objective to be ``magnitude-aligned'' . That is, the noise magnitude of the network's input and its  training objective should be coupled, ideally both vanishing at the endpoints. We explicitly design a new interpolant and a target to resolve this magnitude mismatch.

\begin{definition}[Magnitude-Aligned Stochastic Interpolant]
\label{def:madbm}
Let $\alpha \in (0,1)$ and $k$ be a finite constant. We define the {magnitude-aligned stochastic interpolant} $X_t$ connecting the target $X_0 \sim \rho_0$ and the degraded $X_1 \sim \rho_1$ as:
\begin{equation}
	X_t \coloneqq (1-t^{\alpha})X_0+t^{\alpha}X_1+{kt(1-t)}  Z.
    \label{eq:madbm_x}
\end{equation}
 We define the corresponding {magnitude-aligned training objective} $Y_t$ as the scaled displacement, which decomposes as:
 \begin{equation}
   Y_t \coloneqq \frac{X_t-X_0}{t^{\alpha}} = (X_1-X_0)+{kt^{1-\alpha}(1-t)}  Z.
     \label{eq:madbm_target}
 \end{equation}
The training objective is to predict this target:
 \begin{equation}
   \mathcal{L}_{\text{base}} = \mathbb{E}_{t, X_0, X_1, Z}\left[ \left\| \epsilon(X_t, t; \theta) - Y_t \right\|^2 \right].
   \label{eq:madbm_loss}
\end{equation}
\end{definition}

This construction is specifically chosen to satisfy our endpoint alignment constraints, which we formalize in the following proposition.

\begin{proposition}[Endpoint Noise Alignment]
\label{prop:alignment}
Let the interpolant $X_t$ and the target $Y_t$ be defined as in Def.~\ref{def:madbm}. Let $\gamma_X(t) = kt(1-t)$ be the noise coefficient of the input $X_t$, and $\gamma_Y(t) = kt^{1-\alpha}(1-t)$ be the noise coefficient of the target $Y_t$ from Eq.~(\ref{eq:madbm_target}).
Given $\alpha \in (0, 1)$, both noise coefficients vanish at the endpoints $t=0$ and $t=1$:
\begin{gather}
    \lim_{t \to 0} \gamma_X(t) = 0 \quad \text{and} \quad \lim_{t \to 1} \gamma_X(t) = 0, \\
    \lim_{t \to 0} \gamma_Y(t) = 0 \quad \text{and} \quad \lim_{t \to 1} \gamma_Y(t) = 0.
\end{gather}
\end{proposition}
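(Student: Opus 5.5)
The plan is to verify the four limits directly from the explicit formulas, after first confirming that the stated coefficient $\gamma_Y$ is correct. Substituting Eq.~(\ref{eq:madbm_x}) into $X_t - X_0$ gives
\begin{equation}
X_t - X_0 = t^{\alpha}(X_1 - X_0) + kt(1-t)Z .
\end{equation}
Dividing by $t^{\alpha}$ for $t\in(0,1]$ yields Eq.~(\ref{eq:madbm_target}), since $t/t^{\alpha}=t^{1-\alpha}$. So the noise coefficient of the target is $\gamma_Y(t)=kt^{1-\alpha}(1-t)$, exactly as defined in Proposition~\ref{prop:alignment}.

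The limits of $\gamma_X(t)=kt(1-t)$ are immediate. It is a polynomial, hence continuous on $[0,1]$, so its limits equal its values $\gamma_X(0)=0$ and $\gamma_X(1)=0$. Here we use that $k$ is a finite constant, so that $k\cdot 0=0$.

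The only step needing care is $\gamma_Y$ at $t\to 0$, because the factor $t^{1-\alpha}$ comes from dividing by $t^{\alpha}$. The target $Y_t$ is therefore undefined at $t=0$, and only the limit is meaningful. Since $\alpha\in(0,1)$, the exponent $1-\alpha$ lies in $(0,1)$ and is strictly positive. Hence $t^{1-\alpha}\to 0$ as $t\to 0^+$, which follows from continuity of $t\mapsto t^{p}$ on $[0,\infty)$ for $p>0$. Combined with $(1-t)\to 1$ and $k$ finite, this gives $\gamma_Y(t)\to 0$. At $t\to 1$, the factor $t^{1-\alpha}\to 1$ while $(1-t)\to 0$, so again $\gamma_Y(t)\to 0$.

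The hypothesis $\alpha<1$ is essential. For $\alpha=1$ we would get $\gamma_Y(t)\to k\neq 0$, which reproduces the I2SB-type mismatch. For $\alpha>1$, $\gamma_Y$ would diverge as $t\to 0$. The hypothesis $\alpha>0$ ensures that $t^{\alpha}$ is a nontrivial interpolation weight satisfying the boundary conditions of Def.~\ref{def1}; for the limits themselves, only $\alpha<1$ matters. This completes the argument.
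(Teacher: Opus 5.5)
Your proposal is correct and follows the same route as the paper, which simply evaluates $\gamma_X(t)=kt(1-t)$ and $\gamma_Y(t)=kt^{1-\alpha}(1-t)$ directly at the endpoints using $\alpha\in(0,1)$. Your version is more careful than the paper's one-line argument in two respects: it notes that $t^{1-\alpha}$ is not a polynomial (so continuity of $t\mapsto t^{p}$ for $p>0$ is needed), and it notes that $Y_t$ is undefined at $t=0$, so only the one-sided limit $t\to 0^+$ is meaningful there.
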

\begin{proof}
The proof follows from a direct evaluation of the polynomial terms $\gamma_X(t)$ and $\gamma_Y(t)$, given that $\alpha \in (0,1)$.
\end{proof}

This new formulation successfully resolves the noise magnitude mismatch. As visualized in Fig.~\ref{fig:noise} according to Prop.~\ref{prop:alignment}, the noise terms in the network's input $X_t$ and its target $Y_t$ remain within the same order of magnitude throughout the entire interval.

\subsection{Mean Network for Direction Alignment}

\label{sec:mean_network}

While Prop.~\ref{prop:alignment} solves the magnitude mismatch, the target $Y_t$ in Eq.~(\ref{eq:madbm_target}) retains a deterministic displacement term $(X_1-X_0)$, which is fundamentally different from the pure noise target of standard diffusion models.  A large distributional gap between the degraded $X_1$ and clean $X_0$ makes this term difficult to regress, which can lead to directional errors.
We introduce a Mean Network to map the degraded $X_1$ to a cleaner, intermediate representation $\hat{X}_0$, which is significantly closer to the target $X_0$. We then construct our diffusion bridge between $X_0$ and this new, closer endpoint $\hat{X}_0$.

\begin{definition}[Mean Network]
\label{def:mean_network}
Given the joint distribution of the paired data $(X_0, X_1) \sim (\rho_0, \rho_1)$, we define a mean network $M(\cdot; \phi)$ trained to approximate the posterior mean $\mathbb{E}[X_0 | X_1]$. The network's output is defined as:
\begin{equation}
\hat{X}_0 = M(X_1; \phi).
\label{eq:M}
\end{equation}
The parameters $\phi$ are optimized by minimizing the mean squared error (MSE) objective:
\begin{equation}
\mathcal{L}_{\text{MSE}}(\phi) = \mathbb{E}_{(X_0, X_1)} \left[ \left\|  M(X_1; \phi) - X_0 \right\|^2 \right].
\label{eq:mean_loss}
\end{equation}
\end{definition}
This loss guides $M_\phi(X_1)$ to converge to the posterior mean $\mathbb{E}[X_0 | X_1]$, which we denote as $\hat{X}_0$ (forming distribution $\hat{\rho}_0$). While this $\hat{X}_0$ may be an over-smoothed representation~\cite{blau2018perception}, its role is twofold: First, it corrects the \textit{directional} error by simplifying the regression task from predicting the complex displacement $(X_1 - X_0)$ to the simpler $(\hat{X}_0 - X_0)$. Second, it provides a cleaner bridge endpoint $\hat{\rho}_0$ that is significantly closer to the target $\rho_0$ than the original $\rho_1$. Our core strategy is to construct the bridge between this closer pair $(\rho_0, \hat{\rho}_0)$. The following theorem provides the theoretical justification for this proximity benefit.

\begin{theorem} \label{thm:mean}
Let $ \rho_0$ be the target distribution, $\rho_1$ the degraded distribution, and $\hat{\rho}_0$ be the output distribution of the mean network $M_\phi$ optimized via Eq.~(\ref{eq:mean_loss}). The mean network reduces the distributional gap in the Wasserstein-2 sense:
\begin{equation}
W_2(\rho_0,\hat{\rho}_0) \leq W_2(\rho_0,\rho_1).
\end{equation}
\end{theorem}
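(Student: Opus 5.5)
The plan is to compare everything to the \emph{paired} coupling $(X_0,X_1)$ that is already given by the data and used to train the mean network, and to use the fact that the posterior mean is the $L^2$-optimal predictor of $X_0$ from $X_1$. Throughout, $\hat{X}_0 = \mathbb{E}[X_0\mid X_1]$, which is the minimizer of Eq.~(\ref{eq:mean_loss}) over all measurable functions of $X_1$. I assume $\mathbb{E}\|X_0\|^2<\infty$ and $\mathbb{E}\|X_1\|^2<\infty$, so that every quantity below is finite.

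\textbf{Step 1 (upper bound on the left side).} The pair $(X_0,\hat{X}_0)$ is a coupling of $\rho_0$ and $\hat{\rho}_0$, since $\hat{X}_0$ is a measurable function of $X_1$. By the definition of $W_2$ as an infimum over couplings,
\begin{equation*}
W_2^2(\rho_0,\hat{\rho}_0)\le \mathbb{E}\|X_0-\mathbb{E}[X_0\mid X_1]\|^2 .
\end{equation*}

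\textbf{Step 2 (optimality of the conditional mean).} For any measurable $f$, orthogonality of the residual $X_0-\mathbb{E}[X_0\mid X_1]$ to $L^2(\sigma(X_1))$ gives the Pythagorean identity
\begin{equation*}
\mathbb{E}\|X_0-f(X_1)\|^2=\mathbb{E}\|X_0-\mathbb{E}[X_0\mid X_1]\|^2+\mathbb{E}\|\mathbb{E}[X_0\mid X_1]-f(X_1)\|^2 .
\end{equation*}
Taking $f=\mathrm{id}$ yields $\mathbb{E}\|X_0-\hat{X}_0\|^2\le \mathbb{E}\|X_0-X_1\|^2$. Hence $W_2(\rho_0,\hat{\rho}_0)\le \big(\mathbb{E}\|X_0-X_1\|^2\big)^{1/2}$, which is the transport cost of the paired coupling.

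\textbf{Step 3 (closing the gap; the main obstacle).} It remains to replace $\mathbb{E}\|X_0-X_1\|^2$ by $W_2^2(\rho_0,\rho_1)$. This step cannot hold unconditionally. If $X_1$ is independent of $X_0$ with $\rho_1=\rho_0$, then $W_2(\rho_0,\rho_1)=0$, while $\hat{\rho}_0=\delta_{\mathbb{E}X_0}$ is at positive distance from $\rho_0$ whenever $\rho_0$ is non-degenerate.

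I would therefore make explicit the hypothesis that the paired data coupling attains (or equals) the optimal transport cost between $\rho_0$ and $\rho_1$. Equivalently, the statement can be read with $W_2(\rho_0,\rho_1)$ meaning the cost of the given pairing. Under that hypothesis, Steps 1--2 give the theorem immediately.

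As a further check, I would note that the hypothesis is natural for restoration models with small or ``monotone'' degradations. For example, for additive noise $X_1=X_0+N$, the pairing is the canonical one, and the optimal coupling coincides with it in the Gaussian case, which could be verified directly.

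The residual term in Step 2 also quantifies the improvement: the inequality is strict unless $X_1=\mathbb{E}[X_0\mid X_1]$ almost surely.
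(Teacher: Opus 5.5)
Your diagnosis is correct, and the flaw it exposes is in the paper's argument, not yours. The paper uses the same two ingredients you do: the $L^2$-optimality of the conditional mean, and the bound of $W_2$ by any coupling. But it chains them under the optimal coupling $\gamma^*$ of $(\rho_0,\rho_1)$. It writes $\mathbb{E}_{\gamma^*}\|X_0-X_1\|^2\ge\mathbb{E}_{\gamma^*}\|X_0-\mathbb{E}(X_0\mid X_1)\|^2$ and then treats the law of $\mathbb{E}(X_0\mid X_1)$ as $\hat{\rho}_0$. The Pythagorean identity justifies that inequality only when the conditional mean is computed under $\gamma^*$. By contrast, $\hat{\rho}_0$ is the law of the conditional mean under the paired data distribution on which $M_\phi$ is trained. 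The chain is therefore valid when the data coupling is itself optimal, which is exactly the hypothesis you add. Your independent-copy example ($W_2(\rho_0,\rho_1)=0$ but $W_2^2(\rho_0,\hat{\rho}_0)=\mathrm{tr}\,\mathrm{Cov}(X_0)>0$) shows the statement is false without some such assumption. Your Steps 1--2 are a correct proof of the repaired version.

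One supporting claim is wrong, though: the Gaussian additive-noise pairing does \emph{not} attain the optimal cost. Take $X_0\sim\mathcal{N}(0,1)$ and $X_1=X_0+N$ with $N\sim\mathcal{N}(0,s^2)$ independent. The pairing costs $s^2$, whereas $W_2^2(\rho_0,\rho_1)=(\sqrt{1+s^2}-1)^2<s^2$ for every $s>0$. More generally, suppose $\rho_0$ is absolutely continuous. Brenier's theorem then makes the optimal coupling unique and induced by a map $\nabla\varphi$ with $\varphi$ convex. Your hypothesis therefore forces $X_1=\nabla\varphi(X_0)$ almost surely. That is a deterministic monotone degradation, such as a symmetric positive semidefinite blur or a block-averaging projection, and it rules out any degradation with independent noise. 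In the Gaussian example the theorem's conclusion happens to hold, since $W_2(\rho_0,\hat{\rho}_0)=1-(1+s^2)^{-1/2}=W_2(\rho_0,\rho_1)/\sqrt{1+s^2}$. Your route cannot reach it, however: Step 1 bounds $W_2^2(\rho_0,\hat{\rho}_0)$ by the MMSE $s^2/(1+s^2)$, which exceeds $W_2^2(\rho_0,\rho_1)$ for small $s$. So present the hypothesis as a sufficient condition, and drop the claim that it is the natural regime for restoration with noise.
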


The proof for Theorem~\ref{thm:mean} is provided in Supplementary Materials. Our experimental results confirm this theoretical motivation: establishing the diffusion bridge between distributions with a smaller initial divergence directly translates into more stable and accurate fitting.

\subsection{Final NADB Formulation}

With both components established, we define the full Noise-Aligned Diffusion Bridge (as shown in Fig.~\ref{fig:c}), which first computes the posterior mean $\hat{X}_0 =  M(X_1; \phi)$, then builds magnitude-aligned bridge between the target $X_0$ and this cleaner endpoint $\hat{X}_0$. The final NADB stochastic interpolant is:
\begin{equation}
X_t \coloneqq (1-t^{\alpha})X_0+t^{\alpha}\hat{X}_0+{kt(1-t)}  Z.
\label{eq:nadb_x_final}
\end{equation}
The network $\epsilon(X_t, t; \theta)$ is then trained to predict the corresponding target $Y_t = (\hat{X}_0-X_0)+{kt^{1-\alpha}(1-t)}  Z$ using the final objective:
\begin{equation}
\mathcal{L}_{\text{NADB}} = \mathbb{E}_{t, X_0, X_1, Z}\left[ \left\| \epsilon(X_t, t; \theta) - \frac{X_t-X_0}{t^{\alpha}} \right\|^2 \right].
\label{eq:nadb_loss_final}
\end{equation}
This training process is summarized in Algorithm~\ref{alg:A1}.
\begin{algorithm}[!t]
\caption{NADB Training}
\label{alg:A1}
\renewcommand{\algorithmicrequire}{\textbf{Input:}}
\renewcommand{\algorithmicensure}{\textbf{Output:}}
\begin{algorithmic}[1]
\Require Paired data distributions $(\rho_0, \rho_1)$, pre-trained mean network $M_\phi$
\Repeat
\State $t\sim \mathcal{U}([0,1])$, $X_0\sim \rho_0$, $X_1\sim \rho_1$
\State $\hat{X}_0 = M_\phi(X_1)$ \Comment{Get cleaner endpoint via Eq.~(\ref{eq:M})}
\State $X_t \sim q(X_t|X_0,\hat{X}_0)$ \Comment{Sample via Eq.~(\ref{eq:nadb_x_final})}
\State Take gradient descent step on $\epsilon(X_t,t;\theta)$ using $\mathcal{L}_{\text{NADB}}$ in Eq.~(\ref{eq:nadb_loss_final})
\Until converges
\end{algorithmic}
\end{algorithm}

\paragraph{Sampling} 
The corresponding reverse sampling procedure is derived in detail in Supplementary Materials. A key property is that the reverse process naturally splits into two distinct stages at a time threshold $d \approx \frac{1-\alpha}{2-\alpha}$. This split is necessary to maintain a non-positive variance term near the $t \to 0$ endpoint. This approach provides flexibility, enabling a separate, endpoint-conditioned transition for $t \in [0, d]$. The full generation process is summarized in Algorithm~\ref{alg:A2}.
\begin{algorithm}[!t]
\caption{NADB Generation}
\label{alg:A2}
\renewcommand{\algorithmicrequire}{\textbf{Input:}}
\renewcommand{\algorithmicensure}{\textbf{Output:}}
\begin{algorithmic}[1]
\Require Degraded input $X_1 \sim \rho_1$, time threshold $d$, steps $N$
\State $\hat{X}_0 = M_\phi(X_1)$ \Comment{Compute clean endpoint once}
\For{$n=N$ to $1$} 
\State Predict $X^{\epsilon}_0$ using $\epsilon(X_t,t;\theta)$
\If{$t_n < d$}
\State Sample from $X_{n-1}\sim p(X_{n-1}\mid X^{\epsilon}_0,\hat{X}_0,X_{n})$ 
\Else
\State Sample from $X_{n-1} \sim p(X_{n-1}\mid X^{\epsilon}_0,X_{n})$ 
\EndIf
\EndFor \\
\Return $X_0$
\end{algorithmic}
\end{algorithm}

%% file: sec/experiment.tex
\section{Experiments}
\label{sec:experiments}

\subsection{Implementation Details}

\begin{table*}
  \caption{Comparison with I2SB on numerous image restoration tasks.}
  \label{tab:all}
  \centering
  \footnotesize
  \renewcommand{\arraystretch}{0.85}
  \setlength{\tabcolsep}{2pt} 
  \begin{tabular}{@{}l|cccccc|cccccc|cccccc@{}}
    \toprule
\multicolumn{1}{c|}{\textbf{Method}} &   \multicolumn{6}{c|}{\textbf{JPEG Restoration}} & \multicolumn{6}{c|}{\textbf{4$\times$ Super-Resolution}} & \multicolumn{6}{c}{\textbf{Deblurring}} \\
    \midrule
    &QF&NFE&FID$\downarrow$&PSNR$\uparrow$& SSIM$\uparrow$&LPIPS$\downarrow$&Filter&NFE&FID$\downarrow$&PSNR$\uparrow$&SSIM$\uparrow$&LPIPS$\downarrow$&Kernel&NFE&FID$\downarrow$&PSNR$\uparrow$&SSIM$\uparrow$&LPIPS$\downarrow$\\
\cmidrule(lr){2-19}
I2SB&\multirow{4}{*}{5}&\multirow{2}{*}{10}&8.0&24.50&0.69&0.30&\multirow{4}{*}{Pool}&\multirow{2}{*}{10}&7.3&24.86&0.70&0.27&\multirow{4}{*}{Uniform}&\multirow{2}{*}{10}&10.3&24.19&0.65&0.32 \\ 
NADB&                  &                   &6.9&24.45&0.69&0.30&                     &                   &5.3&24.75&0.71&0.23&                        &                   &4.8&27.70&0.81&0.18  \\
 \cmidrule(r){1-1} \cmidrule(lr){3-7} \cmidrule(lr){9-13} \cmidrule(lr){15-19} 
I2SB&                  &\multirow{2}{*}{100}&4.7&23.60&0.65&0.31&                    &\multirow{2}{*}{100}&4.1&23.57&0.65&0.26&                       &\multirow{2}{*}{100}&5.0&22.47&0.58&0.32 \\ 
NADB&                  &                    &4.3&23.63&0.65&0.30&                    &                    &3.7&23.63&0.67&0.22&                       &                    &3.4&27.39&0.80&0.17  \\
\midrule
I2SB&\multirow{4}{*}{10}&\multirow{2}{*}{10}&6.1&26.27&0.76&0.24&\multirow{4}{*}{Bicubic}&\multirow{2}{*}{10}&8.1&25.07&0.70&0.27&\multirow{4}{*}{Gaussian}&\multirow{2}{*}{10}&7.4&25.42&0.71&0.27 \\ 
NADB&                  &                   &5.7&26.55&0.77&0.23&                     &                   &6.8&25.39&0.73&0.24&                        &                   &4.2&30.03&0.87&0.15  \\
 \cmidrule(r){1-1} \cmidrule(lr){3-7} \cmidrule(lr){9-13} \cmidrule(lr){15-19} 
I2SB&                  &\multirow{2}{*}{100}&3.6&25.34&0.72&0.24&                    &\multirow{2}{*}{100}&4.1&23.74&0.65&0.27&                       &\multirow{2}{*}{100}&3.9&23.98&0.66&0.27 \\ 
NADB&                  &                    &3.5&25.81&0.74&0.23&                    &                    &4.1&24.46&0.70&0.22&                       &                    &3.1&30.34&0.88&0.13  \\ 
    \bottomrule
  \end{tabular}
\end{table*}

\begin{figure*}[htbp]
  \centering
  \includegraphics[width=\textwidth]{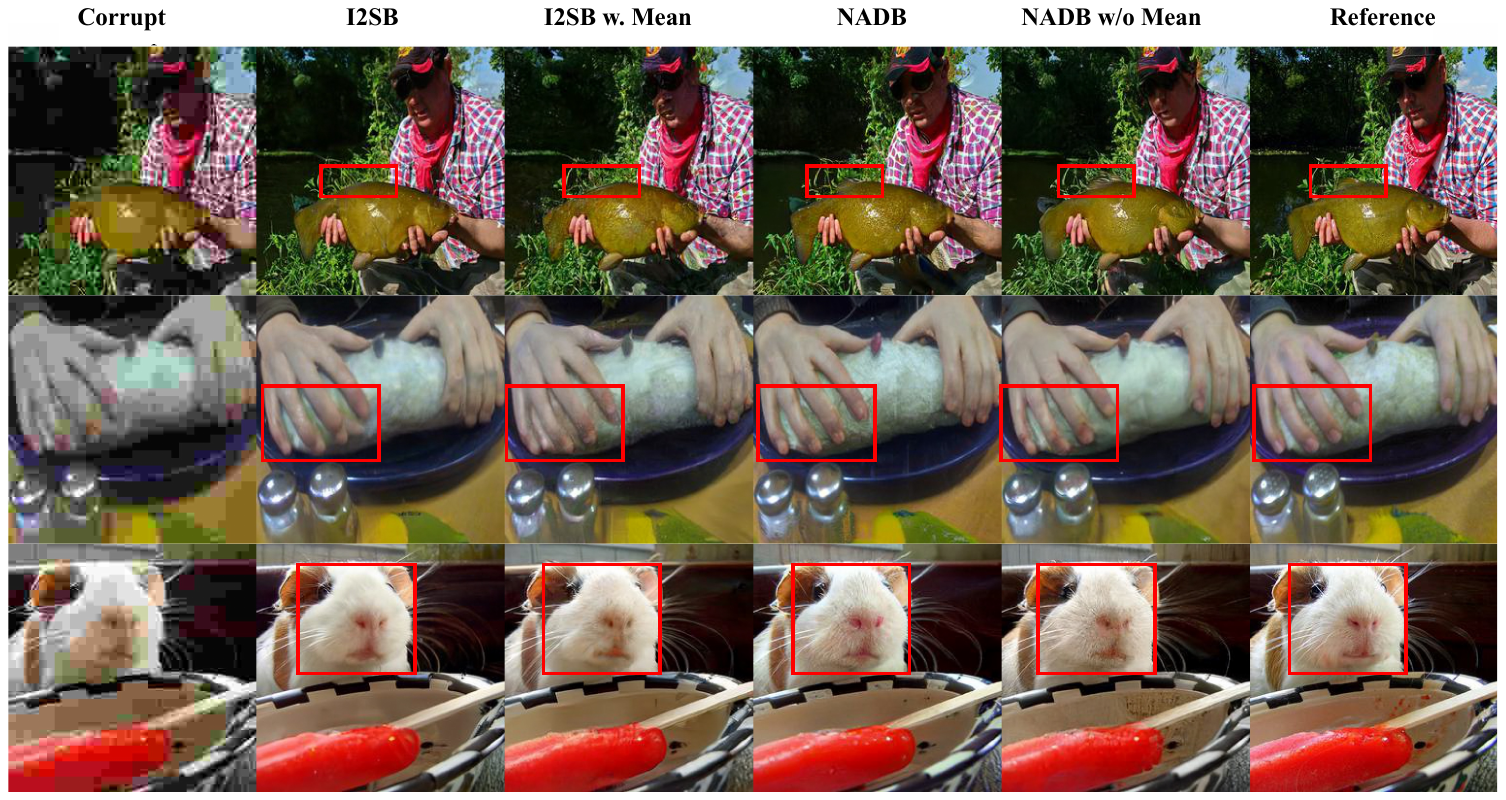}
  \caption{Ablation comparison on the JPEG-5 restoration task. We compare I2SB, I2SB augmented with our Mean Network (``w. Mean''), NADB without the Mean Network (``w/o Mean''), and the proposed NADB. }
  \label{fig:xiaorong}
\end{figure*}

\begin{figure*}[t!]
    \centering
        \includegraphics[width=0.95\textwidth]{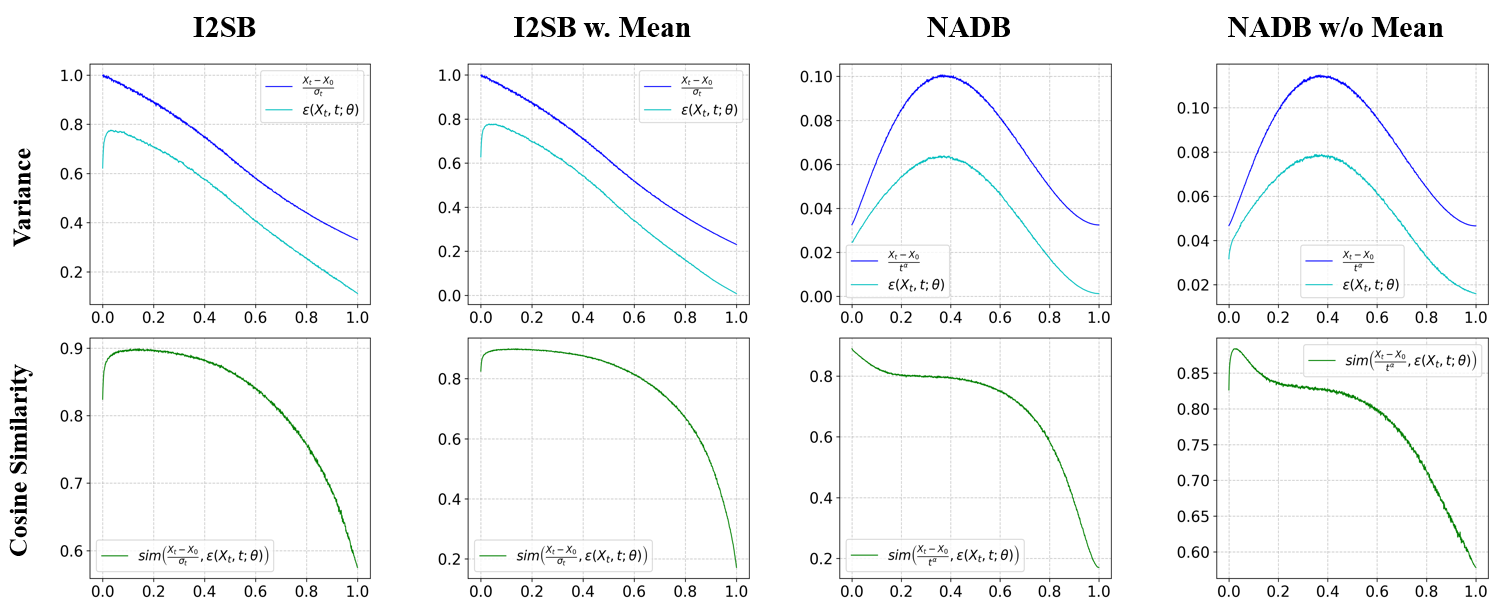}
        \caption{Endpoint performance comparison for the ablated models across the time step $t$ (horizontal axis).}
        \label{fig:xrtubiao}
\end{figure*}

\paragraph{Network Architecture and Training.}
Our framework consists of two networks with identical U-Net~\cite{ronneberger2015u} architectures: the main {Restoration Network} $\epsilon(X_t, t; \theta)$ and the {Mean Network} $M(X_1; \phi)$. Most of our implementation and hyperparameters follow I2SB~\cite{liu20232} for a fair comparison. 
Specifically, we initialize the two U-Nets using the publicly available ADM checkpoint~\cite{dhariwal2021diffusion} (pre-trained on ImageNet 256$\times$256). The Mean Network $M_\phi$ is first trained independently on each task until its MSE loss converges,  and the time-step input to $M_\phi$ is held constant with zero. 
Both networks are trained using Adam optimizer with a learning rate of $1 \times 10^{-4}$ and a batch size of $256$. For NADB, we set the hyperparameters $\alpha=0.4$ and $k=0.75$. 
All experiments were conducted on 8 NVIDIA A100 GPUs.

\paragraph{Baselines.}
We compare our method against three categories of generative models (comparisons with some baselines are provided in the Supplementary Materials):
\begin{itemize}
    \item \textbf{Conditional Diffusion Models:} We benchmark against established restoration methods, including DDRM~\cite{kawar2022denoising}, DDNM~\cite{wang2022zero},  $\Pi$GDM~\cite{song2023pseudoinverse}, Palette~\cite{saharia2022palette}, DiT4SR~\cite{Duan2025DiT4SRTD} and RDDM~\cite{Liu_2024_CVPR}.
    \item \textbf{Score-Based Diffusion Bridges:} We conduct a comparison against I2SB~\cite{liu20232}, I3SB~\cite{wang2025implicit},  DDBM~\cite{ICLR2024_20e45668}, RDBM~\cite{Wang2025ResidualDB}, and GOUB~\cite{Yue2023ImageRT}. For I2SB, we conduct a comprehensive evaluation  to isolate the improvements from our proposed formulation.  I2SB maintains the same training steps as NADB.
\end{itemize}
\paragraph{Datasets.} We compare methods following the benchmarks from I2SB.
For image restoration, we evaluate on three tasks: JPEG artifact removal, deblurring, and 4$\times$ super-resolution (64$\times$64 $\to$ 256$\times$256). All models are evaluated on the 256$\times$256 ImageNet dataset~\cite{deng2009imagenet}.
For image translation, we use the standard edges$\to$handbags and edges$\to$shoes datasets, evaluated at 64$\times$64 resolution.

\paragraph{Evaluation.} For restoration, we sample from 10,000 images from the ImageNet validation set, and FID is computed on full validation set. For translation, due to the small validation set, FID is computed on the training set, following standard practice.
To ensure a fair comparison, all experimental settings were kept consistent with the baseline methods.
Models without pre-trained weights were faithfully reproduced from their published configurations.

\begin{table}
  \caption{Comparison of NADB with diffusion models (NFE=100). Reported results are taken from~\cite{liu20232}, and 4$\times$ Super-Resolution is evaluated on the full validation set, differing from Tab.~\ref{tab:all}.}
  \label{tab:cwdm}
  \centering
  \scriptsize 
  \renewcommand{\arraystretch}{0.85}
  \setlength{\tabcolsep}{3pt}
  \begin{tabular}{@{}lcc|lcc|lcc@{}}
    \toprule
    \multicolumn{3}{c|}{\textbf{JPEG Restoration}} & \multicolumn{3}{c}{\textbf{4$\times$ Super-Resolution}} &
    \multicolumn{3}{|c}{\textbf{Deblurring}} \\
    \cmidrule(lr){1-3} \cmidrule(lr){4-6}\cmidrule(lr){7-9}
    QF & Method & FID$\downarrow$ & Filter & Method & FID$\downarrow$ & Kernel & Method & FID$\downarrow$ \\
    \midrule
    \multirow{4}{*}{5} 
      & DDRM & 28.2 & \multirow{4}{*}{Pool} 
      & DDRM & 14.8 & \multirow{4}{*}{\makecell{Uniform}} 
      & DDRM & 9.9 \\ 
      & $\Pi$GDM & 8.6 & & DDNM & 9.9 & & DDNM & 3.0 \\
      & Palette & 8.3 & &$\Pi$GDM & 3.8 & &Palette &4.1 \\
      & NADB & 4.3 & & NADB & 1.1 & & NADB & 3.4 \\
      \midrule
    \multirow{4}{*}{10}
      & DDRM & 16.7 & \multirow{4}{*}{\makecell{Bicubic}}
      & DDRM & 21.3 & \multirow{4}{*}{\makecell{Gaussian}} 
      & DDRM & 6.1 \\
      & $\Pi$GDM & 6.0 & & DDNM & 13.6 & & DDNM & 2.9 \\
      & Palette & 5.4 & & $\Pi$GDM & 3.6 & & Palette & 3.1 \\
      & NADB & 3.5 & & NADB & 1.0 & & NADB & 3.1 \\
    \bottomrule
  \end{tabular}
\end{table}

\subsection{Comparisons}

\paragraph{Comparison with Conditional Diffusion Models}
Tab.~\ref{tab:cwdm} presents a comparison against leading conditional diffusion models for restoration. We observe that while these methods perform well on distortion metrics, their outputs can exhibit blurring. This reflects a known perception-distortion trade-off~\cite{saharia2022palette}, where pixel-wise losses favor over-smoothed results. Given that generative models are optimized for perceptual quality, we follow standard practice~\cite{saharia2022palette} in using FID as the primary metric for comparing. The results demonstrate the strong performance of our bridge-based model against these general-purpose diffusion solvers.

\begin{figure}[t]
    \centering
    \includegraphics[width=0.8\linewidth]{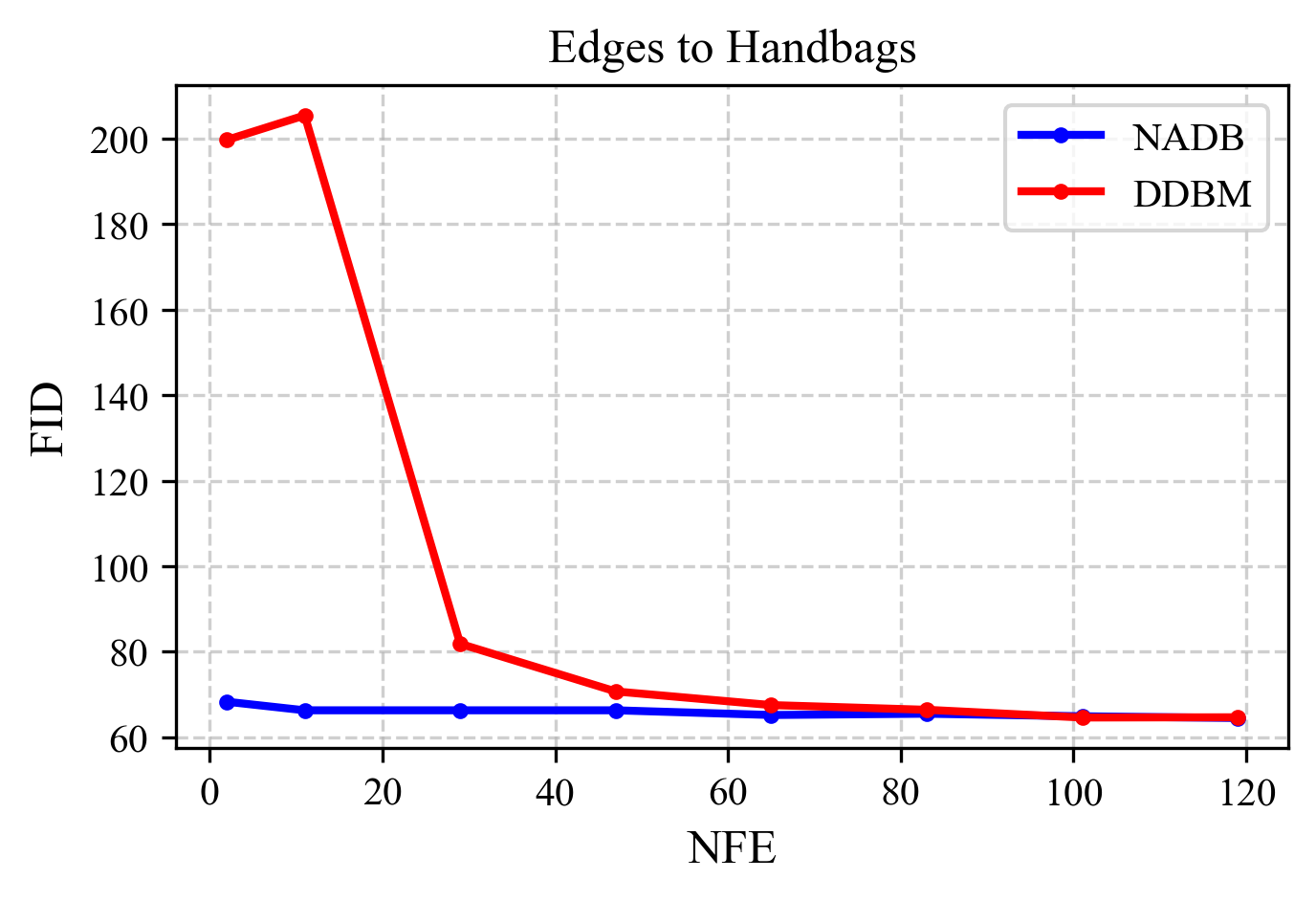}
    \caption{Comparison of FID changing trends. }
    \label{fig:fid_nfe}
\end{figure}

\paragraph{Comparison with I2SB}
We conduct a head-to-head comparison with I2SB~\cite{liu20232}, the baseline our work directly improves upon. Both models were trained with an equivalent computational budget for a fair comparison. As shown in Tab.~\ref{tab:all}, NADB consistently  outperforms I2SB across multiple image restoration tasks. 
This superiority is evident in the vast majority of both perceptual and distortion metrics, confirming the effectiveness of our proposed modifications.

\paragraph{Generalization to Image-to-Image Translation}
\begin{figure*}[t!]
    \centering
        \includegraphics[width=0.8\textwidth]{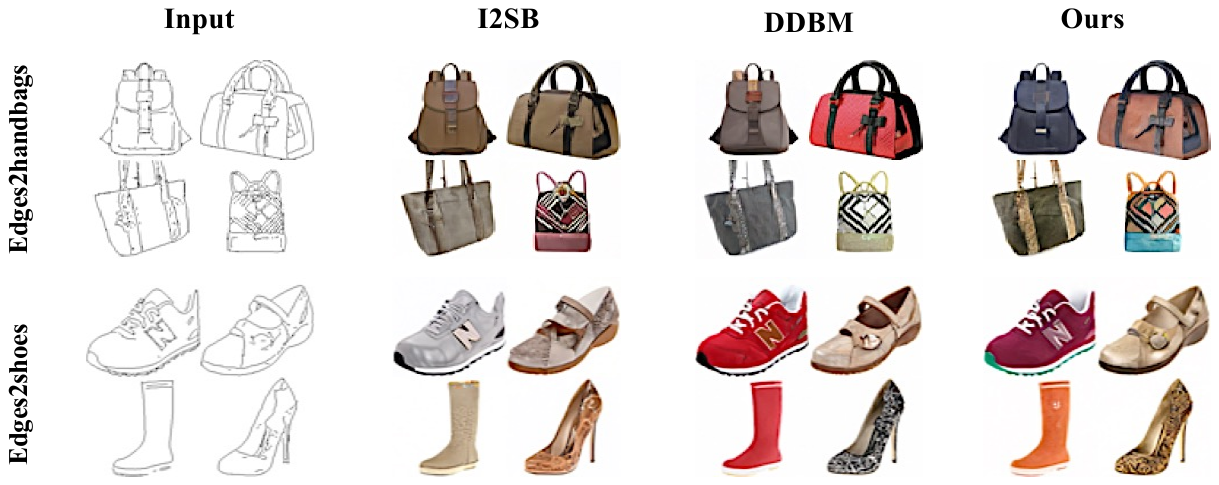}
        \caption{Image translation comparison. All images are in $64 \times 64$ resolution.}
        \label{fig:e2h_e2s}
\end{figure*}
To evaluate the generalization capability of NADB, we conducted a systematic assessment on standard image-to-image translation tasks at $64\times64$ resolution,  as illustrated in Fig.~\ref{fig:e2h_e2s}. We compared NADB against I2SB and DDBM, a strong baseline for image translation. As shown in Tab.~\ref{tab:trans}, our method outperforms both competing approaches. Notably, Fig.~\ref{fig:fid_nfe} shows that our method maintains high-quality generation with low NFE, while DDBM degrades significantly.

\begin{table}[t]
  \centering
  \caption{Comparison with diffusion bridge models on image translation tasks.}
  \label{tab:trans}
  \footnotesize
  \setlength{\tabcolsep}{3pt}
  \begin{tabular}{@{}l|cccc|cccc@{}}
    \toprule
    & \multicolumn{4}{c}{\textbf{Edges to Handbags}} & \multicolumn{4}{c}{\textbf{Edges to Shoes}} \\
    \cmidrule(lr){2-5} \cmidrule(l){6-9}
    Method & FID$\downarrow$ & PSNR$\uparrow$ & SSIM$\uparrow$ & LPIPS$\downarrow$ & FID$\downarrow$ & PSNR$\uparrow$ & SSIM$\uparrow$ & LPIPS$\downarrow$ \\
    \midrule
    I2SB  & 64.8 & 16.70 & 0.62 & 0.21 & 42.9 & 19.39 & 0.73 & 0.15 \\
    DDBM  & 64.7 & 20.72 & 0.74& 0.10 & 44.0 & 21.21 & 0.77 & 0.09 \\
    NADB  & 64.5 & 18.90& 0.69 & 0.13 & 42.2 & 19.41 & 0.74 & 0.13 \\
    \bottomrule
  \end{tabular}
\end{table}

\subsection{Ablation Study}

Our NADB framework introduces two key innovations to solve the endpoint underfitting phenomenon: a {Mean Network} to correct the directional error, and a {Magnitude-Aligned Interpolant} to resolve the magnitude mismatch. To validate the individual contributions and necessity of both components, we conduct a comprehensive ablation study.

We compare four distinct models on the challenging JPEG-5 restoration task, a benchmark chosen specifically because its high degradation accentuates the endpoint failures we aim to solve:
the original I2SB model~\cite{liu20232}, the I2SB baseline augmented with our Mean Network preprocessing (I2SB w. Mean), our NADB formulation applied directly between $X_0$ and $X_1$ without the Mean Network (Ours w/o Mean), our complete proposed method.
The quantitative results of this ablation are presented in Tab.~\ref{tab:jpeg5xiaorong}. We provide qualitative comparisons in Fig.~\ref{fig:xiaorong} and detailed endpoint performance plots (variance and cosine similarity) for the ablated models in Fig.~\ref{fig:xrtubiao}.
From these results, we draw two key conclusions:
\paragraph{Magnitude alignment alone is insufficient.}  ``NADB w/o Mean'' successfully resolves the variance collapse (magnitude failure), but still exhibits a significant drop in cosine similarity (direction failure) at the endpoint.
\paragraph{Mean network alone is insufficient.}  The ``I2SB w. Mean'' model fails to mitigate the underfitting, with both its variance and direction metrics collapsing. This demonstrates that simply preprocessing the endpoint is not enough if the underlying mapping remains ill-conditioned.

These findings empirically confirm our analysis: the Mean Network is required to correct the prediction direction, while the Magnitude-Aligned Interpolant is required to fix the variance. Both components are essential to fully resolve the endpoint underfitting problem.

\begin{table}
\setlength{\tabcolsep}{4pt}
  \caption{Qualitative comparisons with four models on {JPEG-5} restoration tasks.}
  \label{tab:jpeg5xiaorong}
  \centering
  \footnotesize
  \begin{tabular}{@{}lcccccc@{}}
    \toprule
    QF & NFE&Method & FID $\downarrow$ & PSNR $\uparrow$&SSIM$\uparrow$ & LPIPS $\downarrow$  \\
    \midrule
      \multirow{8}{*}{5}& \multirow{4}{*}{10} &I2SB & 8.0 &24.50 &0.69&0.30  \\
                         &                    & NADB & 6.9&24.45&0.69&0.30\\
                         &                   & I2SB w. Mean& 8.8&24.51&0.69&0.31\\
                          &                    & NADB w/o Mean &7.0&24.36&0.69&0.30\\
                         \cmidrule(lr){2-7}
                        & \multirow{4}{*}{100} &I2SB & 4.7 &23.60 &0.65&0.31  \\
                         &                      &NADB & 4.3 &23.63 &0.65&0.30  \\
                           &                   & I2SB w. Mean&4.4&23.64&0.65&0.31\\
                          &                     & NADB w/o Mean&5.2&23.54&0.65&0.31\\
    
    \bottomrule
  \end{tabular}
\end{table}

%% file: sec/Conclusion.tex
\section{Conclusion}
\label{sec:Clu}

In this paper, we identify a critical endpoint underfitting in diffusion bridge models, rooted in a fundamental noise level mismatch that causes a dual failure in prediction magnitude and direction. 
We propose NADB to resolve this by introducing a Mean Network that corrects directional failure and a Magnitude-Aligned Interpolant that resolves magnitude failure. Experiments confirm NADB resolves underfitting and outperforms prior methods, highlighting the value of coupling the interpolant path with the training \mbox{objective}.

{\sloppy
\paragraph{Acknowledgments.} This paper is supported by the National Natural Science Foundation of China (Nos. U23B2012, 12431012).\par
}

%% file: sec/X_suppl.tex
\clearpage
\setcounter{page}{1}
\maketitlesupplementary

\section{Sampling Procedure}
 Since the training  objective enables the prediction of the initial image $X_0$ from any noisy state $X_t$, we design the corresponding sampling distribution to ensure that samples progressively denoise along the interpolation path.
 Let $\{t_i\}_{i=0}^N$ represent a partition of the unit interval $[0, 1]$, where $t_0 = 0$, $t_N = 1$, and $t_i < t_{i+1}$ for all $i = 0, \dots, N$. Our sampling framework operates by coordinating the use of two distinct conditional distributions: the standard reverse transition $p(X_{t-1} \mid X^{\epsilon}_0, X_t)$ and the endpoint-conditioned distribution $p(X_{t-1} \mid X^{\epsilon}_0, \hat{X}_0, X_t)$. Specifically, we adopt a partitioned approach that applies  different conditional distributions over distinct temporal regions.   The standard reverse transition is used over the interval over the interval $[t_i, t_N]$, while the endpoint-conditioned distribution is employed over the complementary interval $[t_0, t_i]$. This formulation eliminates the intermediate step of solving for $x^{\epsilon}_0$ from the network output $\epsilon_\theta$. We refer to this sampling method as a two-stage sampling strategy, implemented as follows:
 \subsection{The First Stage }
 We use $X_{t_{i-1}}\sim p(X_{t_{i-1}}\mid X^{\epsilon}_0,X_{t_i})$ from $t=1$ to $t=t_i$. The conditional probability distribution can be expressed in its reparameterized form as:
 \begin{equation}
 	X_{t_{i-1}}=X_{t_i}-\epsilon_{\theta}\Delta_t+\sigma_t Z, 
 	\label{stage1}
 \end{equation}
 where $\sigma_t=\sqrt{\left[kt_{i-1}(1-t_{i-1})\right]^2-\left[kt_{i-1}^{\alpha}t_i^{1-\alpha}(1-t_i)\right]^2}$, 
 	 $\Delta_t=t_i^{\alpha}-t_{i-1}^{\alpha}$, $t_{i-1} \geq \frac{1-\alpha}{2-\alpha}$.
 It is important to note that this formulation is only valid within the interval $[\frac{1-\alpha}{2-\alpha},1]$, which constitutes the fundamental rationale for our two-stage sampling approach. 
 Derivation of Eq.~(\ref{stage1}) can be found in Sec.~\ref{proof:sp}.

 \begin{table*}
  \caption{Comparison with PMRF in numerous image restoration tasks.}
  \label{tab:pmrf}
  \centering
  \footnotesize
  \renewcommand{\arraystretch}{0.85}
  \setlength{\tabcolsep}{2pt} 
  \begin{tabular}{@{}l|cccccc|cccccc|cccccc@{}}
    \toprule
\multicolumn{1}{c|}{\textbf{Method}} &   \multicolumn{6}{c|}{\textbf{JPEG Restoration}} & \multicolumn{6}{c|}{\textbf{4$\times$ Super-Resolution}} & \multicolumn{6}{c}{\textbf{Deblurring}} \\
    \midrule
    &QF&NFE&FID$\downarrow$&PSNR$\uparrow$& SSIM$\uparrow$&LPIPS$\downarrow$&Filter&NFE&FID$\downarrow$&PSNR$\uparrow$&SSIM$\uparrow$&LPIPS$\downarrow$&Kernel&NFE&FID$\downarrow$&PSNR$\uparrow$&SSIM$\uparrow$&LPIPS$\downarrow$\\
\cmidrule(lr){2-19}
PMRF&\multirow{4}{*}{5}&\multirow{2}{*}{10}&7.6&24.62&0.69&0.31&\multirow{4}{*}{Pool}&\multirow{2}{*}{10}&5.1&25.24&0.73&0.21&\multirow{4}{*}{Uniform}&\multirow{2}{*}{10}&6.6&27.84&0.82&0.20 \\ 
NADB&                  &                   &6.9&24.45&0.69&0.30&                     &                   &5.3&24.75&0.71&0.23&                        &                   &4.8&27.70&0.81&0.18  \\
 \cmidrule(r){1-1} \cmidrule(lr){3-7} \cmidrule(lr){9-13} \cmidrule(lr){15-19} 
PMRF&                  &\multirow{2}{*}{100}&7.8&23.69&0.64&0.32&                    &\multirow{2}{*}{100}&4.5&24.03&0.69&0.21&                       &\multirow{2}{*}{100}&7.2&27.53&0.81&0.21 \\ 
NADB&                  &                    &4.3&23.63&0.65&0.30&                    &                    &3.7&23.63&0.67&0.22&                       &                    &3.4&27.39&0.80&0.17  \\

    \bottomrule
  \end{tabular}
\end{table*}
 \subsection{The Second Stage}
 The direct application of Eq.~(\ref{stage1}) over the interval $[0, \frac{1-\alpha}{2-\alpha}]$ is prohibited due to the non-negativity requirement of its standard deviation term $\sigma_t$. To overcome this limitation, we introduce a hyperparameter $w$ into the noise term of $X_{t_{i-1}}\sim p(X_{t_{i-1}}\mid X^{\epsilon}_0, \hat{X}_0, X_{t_i})$ , yielding the modified expression:
 \begin{equation}
 	X_{t_{i-1}}=a\epsilon_{\theta}-bX_{t_i}+c\hat{X}_0+\sigma_{wt} Z,
 	\label{stage2}
 \end{equation}
 where
 \begin{equation}
     \begin{aligned}
         a&=t_i^{\alpha}\left(w+t_{i-1}^{\alpha}-1-wt_i^{\alpha}\right), \\
         b&=t_{i-1}^{\alpha}-1-wt_i^{\alpha}, \\
         c&=t_{i-1}^{\alpha}-wt_i^{\alpha}, \\
         \sigma_{wt}&=\sqrt{\left[kt_{i-1}(1-t_{i-1})\right]^2-\left[kwt_i(1-t_i)\right]^2}.
     \end{aligned}
 \end{equation}
 It should be noted that Eq.~(\ref{stage2}) remains valid over the entire interval $[0,1]$. This allows the value of $t_i$ to be flexibly chosen (as long as $t_i \geq \frac{1-\alpha}{2-\alpha}$) in order to determine the specific subinterval over which Eq.~(\ref{stage2}) is applied. We set the partition point $d$ between the two sampling stages to $\frac{1-\alpha}{2-\alpha}$ by default.
 The hyperparameter $w$ controls the magnitude of $\sigma_t$, thereby determining the amount of stochastic noise introduced over the interval 
 $[0,t_i]$. When $\sigma_t=0$, the sampling process within this interval reduces to a deterministic trajectory. In our experiments, we set $w=\frac{t_{i-1}}{t_i}$ to evaluate the impact of varying noise levels on the final output.
 Derivation of Eq.~(\ref{stage2}) can be found in Sec.~\ref{proof:sp}.
 
\subsection{Derivation of Sampling Procedure}
\label{proof:sp}
\begin{proof}
Our derivation leverages the reparameterization technique. For $s < t$, we define the process as:
\begin{align}
X_s &= X_t - \frac{X_t - X^{\epsilon}_0}{t^{\alpha}} \Delta_t + \sigma_t Z \\
&= \frac{\Delta_t}{t^{\alpha}} X^{\epsilon}_0 + \frac{t^{\alpha} - \Delta_t}{t^{\alpha}} X_t + \sigma_t Z.
\end{align}
According to the calculation rules of reparameterization, the coefficients of Gaussian noise in the left and right equations are equal, and we obtain the following equations:
\begin{align}
\frac{\Delta_t}{t^{\alpha}} &= 1 - t^{\alpha}, \\
\left[k s (1-s)\right]^2 &= \left[ \frac{t^{\alpha} - \Delta_t}{t^{\alpha}} \cdot k t (1-t) \right]^2 + \sigma_t^2.
\end{align}
Solving this system yields $\Delta_t = t^{\alpha} - s^{\alpha}$ and
$\sigma_t = \sqrt{ \left[k s (1-s)\right]^2 - \left[k s^{\alpha} t^{1-\alpha} (1-t)\right]^2 }$.

The non-negativity constraint $\sigma_t \geq 0$ implies $s \geq \frac{1-\alpha}{2-\alpha}$, and thus $d \geq \frac{1-\alpha}{2-\alpha}$.

For the second stage from $t = d$ to $t = 0$, we introduce a hyperparameter $w$ and define $\sigma_{wt} = \sqrt{ \left[k s (1-s)\right]^2 - \left[w k t (1-t)\right]^2 }$. Using reparameterization:
\begin{align}
X_s &= (1 - s^{\alpha}) X^{\epsilon}_0 + s^{\alpha} \hat{X}_0 + k s (1-s) Z \\
&= (1 - s^{\alpha}) X^{\epsilon}_0 + s^{\alpha} \hat{X}_0 + \sigma_{wt} Z + w k t (1-t) Z,
\end{align}
and substituting the identity $w k t (1-t) Z = w [X_t - (1 - t^{\alpha}) X^{\epsilon}_0 - t^{\alpha} \hat{X}_0]$, we rearrange to express $X_s$ in terms of $X_t$, $X_1$ and $Z$:
\begin{align}
X_s &= \left[w t^{\alpha} - s^{\alpha} + 1\right] X_t + \left(s^{\alpha} - w t^{\alpha}\right) \hat{X}_0 + \sigma_{wt} Z \\
&\quad + t^{\alpha} \left[s^{\alpha} - 1 + w (1 - t^{\alpha})\right] \cdot \frac{X_t - X^{\epsilon}_0}{t^{\alpha}}.
\end{align}
This completes the derivation.
\end{proof}

\section{Proof of Theorem~\ref{thm:mean}}

\begin{theorem} \label{thm:mean_proof}
Let $ \rho_0$ be the target distribution, $\rho_1$ the degraded distribution, and $\hat{\rho}_0$ the output distribution of the mean network $M_\phi$ optimized via Eq.~(\ref{eq:mean_loss}). The mean network reduces the distributional gap in the Wasserstein-2 sense:
\begin{equation}
W_2(\rho_0,\hat{\rho}_0) \leq W_2(\rho_0,\rho_1).
\end{equation}
\end{theorem}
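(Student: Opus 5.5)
The plan is to bound the left-hand side through the particular coupling that the mean network produces, and to use the fact that the identity map $X_1\mapsto X_1$ is one of the candidates the MSE objective in Eq.~(\ref{eq:mean_loss}) competes against. I treat $M_\phi$ as the exact minimizer, i.e.\ $\hat{X}_0=\mathbb{E}[X_0\mid X_1]$; this is the idealization already used in Def.~\ref{def:mean_network}. The argument has three steps.

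\emph{Step 1 (coupling bound).} The pair $(X_0,\hat{X}_0)$, with $\hat{X}_0=M(X_1;\phi)$ computed from the paired sample, has marginals $\rho_0$ and $\hat{\rho}_0$. It is therefore an admissible transport plan, and by the definition of $W_2$ as an infimum over plans,
\begin{equation*}
W_2^2(\rho_0,\hat{\rho}_0)\le \mathbb{E}\bigl[\|X_0-\mathbb{E}[X_0\mid X_1]\|^2\bigr].
\end{equation*}

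\emph{Step 2 (MMSE optimality).} The conditional expectation minimizes $\mathbb{E}\|X_0-g(X_1)\|^2$ over all measurable $g$. This follows from the orthogonality decomposition
\begin{equation*}
\mathbb{E}\|X_0-g(X_1)\|^2=\mathbb{E}\|X_0-\mathbb{E}[X_0\mid X_1]\|^2+\mathbb{E}\|\mathbb{E}[X_0\mid X_1]-g(X_1)\|^2 .
\end{equation*}
Choosing $g=\mathrm{id}$ gives
\begin{equation*}
\mathbb{E}\|X_0-\hat{X}_0\|^2\le \mathbb{E}\|X_0-X_1\|^2 .
\end{equation*}
This requires $X_0$ and $X_1$ to live in the same space with finite second moments, which holds for images of equal resolution, for instance after upsampling in super-resolution.

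\emph{Step 3 (linking to $W_2(\rho_0,\rho_1)$).} Steps 1--2 only yield $W_2(\rho_0,\hat{\rho}_0)\le \bigl(\mathbb{E}\|X_0-X_1\|^2\bigr)^{1/2}$, where the expectation is taken under the given data pairing. This is an upper bound on $W_2(\rho_0,\rho_1)$, not a lower bound, so the chain does not close in general.

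This last step is the main obstacle, and I expect it cannot be closed without an extra hypothesis. There is a simple counterexample: if $X_1$ is independent of $X_0$ and $\rho_1=\rho_0$, then $W_2(\rho_0,\rho_1)=0$. But $\hat{X}_0\equiv\mathbb{E}[X_0]$ is a constant, so $W_2(\rho_0,\hat{\rho}_0)>0$ whenever $\rho_0$ is non-degenerate.

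My plan is therefore to prove the theorem under the explicit assumption that the paired coupling $(X_0,X_1)$ is $W_2$-optimal between $\rho_0$ and $\rho_1$. Under that assumption $\mathbb{E}\|X_0-X_1\|^2=W_2^2(\rho_0,\rho_1)$, and the chain closes. Alternatively, I would restate the result with the right-hand side read as the transport cost of the data pairing, for which Steps 1--2 are already a complete proof.

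I would first try to justify the assumption in the restoration setting by arguing that the degradation is close to a monotone, gradient-of-convex map, which Brenier's theorem identifies as optimal. Failing that, I would adopt the weakened statement above.
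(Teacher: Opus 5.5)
Your proposal is correct, and the obstruction you hit in Step~3 is a genuine flaw in the paper's own proof, not a weakness of your route. The paper uses your two ingredients: the MMSE comparison against the identity map, and the admissible-coupling bound. It closes the gap by passing to an optimal plan $\gamma^*\in\Pi(\rho_0,\rho_1)$ and asserting
\[
\mathbb{E}_{\gamma^*}\|X_0-X_1\|^2\;\ge\;\mathbb{E}_{\gamma^*}\|X_0-\mathbb{E}(X_0\mid X_1)\|^2 .
\]
That is the MMSE inequality. It holds only under the joint law with respect to which the conditional expectation is computed, namely the data pairing. Under $\gamma^*$, the map $g(x_1)=\mathbb{E}_{\mathrm{data}}[X_0\mid X_1=x_1]$ has no optimality property.

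Your counterexample breaks exactly this line. There $\gamma^*$ is the diagonal coupling, so the left side is $0$ while the right side is $\mathbb{E}\|X_0-\mathbb{E}X_0\|^2>0$. The paper's next step, that the image of $\gamma^*$ under $(x_0,x_1)\mapsto(x_0,g(x_1))$ lies in $\Pi(\rho_0,\hat{\rho}_0)$, is fine. So the statement is false as written. Under your hypothesis that the data pairing is $W_2$-optimal, one may take $\gamma^*$ to be the data pairing, and the paper's chain becomes your Steps~1--3. Your weakened version, with the pairing's transport cost on the right, is what the paper's argument actually establishes.

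Two remarks on your fallback plans.

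First, nothing in Steps~1--2 uses that $M_\phi$ is the exact conditional mean. Step~1 holds for any measurable $M$. Step~2 needs only $\mathcal{L}_{\text{MSE}}(\phi)\le\mathbb{E}\|X_0-X_1\|^2$, i.e.\ that the trained network beats the identity map. So the corrected statement survives imperfect training.

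Second, on the Brenier route, only exact optimality of the pairing closes the chain. Approximate optimality leaves a slack of $\mathbb{E}\|X_0-X_1\|^2-W_2^2(\rho_0,\rho_1)$. Also, determinism of the degradation is not what matters; cyclical monotonicity of the pairing's support is. For instance, $X_0\sim\mathrm{Unif}[-1,1]$ with $X_1=2|X_0|-1$ is deterministic and gives $\rho_1=\rho_0$, yet $\hat{X}_0\equiv 0$. Conversely, $X_1=AX_0$ with $A$ symmetric positive semidefinite (e.g.\ an orthogonal projection) satisfies your hypothesis exactly. Additive Gaussian noise does not, since that pairing is supported on $\mathrm{supp}(\rho_0)\times\mathbb{R}^d$. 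The inequality may still hold there, but not by this argument.
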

\begin{proof}
Given $(X_0 , X_1) \sim (\rho_0, \rho_1)$ and  $\hat{X}_0=M_{\phi}(X_1)=\mathbb{E}(X_0\mid X_1)$, there exits:
\begin{equation}
\label{eq:square}
 \mathbb{E} \left[ \| X_0 - \hat{X_0}\|^2 \right]\leq \mathbb{E} \left[ \| X_0 - X_1\|^2 \right].
\end{equation}
Eq.~(\ref{eq:square}) can be verified by expanding the square on the left-hand side. Using the definition of Wasserstein-2 distance and Eq.~(\ref{eq:square}), we obtain:
\begin{align}
    W_2^2\left(\rho_0, \rho_1\right) &=  \inf_{\gamma \in \Pi(\rho_0, \rho_1)} \mathbb{E}_{(X_0, X_1) \sim \gamma} \left[ \| X_0 - X_1 \|^2 \right]  \\
     &=  \mathbb{E}_{(X_0, X_1) \sim \gamma^*} \left[ \| X_0 - X_1 \|^2 \right] \\
     &\geq \mathbb{E}_{(X_0, X_1) \sim \gamma^*} \left[ \| X_0 - \mathbb{E}(X_0\mid X_1)\|^2 \right]\\
     &= \mathbb{E}_{(X_0, \hat{X}_0) \sim \pi^*} \left[ \| X_0 - \hat{X}_0\|^2 \right]\\
     &\geq \inf_{\pi \in \Pi(\rho_0, \hat{\rho}_0)} \mathbb{E}_{(X_0, \hat{X}_0) \sim \pi} \left[ \| X_0 - \hat{X}_0 \|^2 \right] \\
     &=W_2^2\left(\rho_0, \hat{\rho}_0\right)
\end{align}
\end{proof}

\section{Hyperparameters in Eq.~\ref{eq:madbm_x}}
This section outlines the selection of hyperparameters $\alpha$ and $k$ in Eq.~(\ref{eq:madbm_x}). The parameter $k$ controls the overall scale of the stochastic term. We calibrate $k$ so that the maximum amplitude of the stochastic term in Eq.~(\ref{eq:madbm_x}) closely matches that of I2SB~\cite{liu20232} in Eq.~(\ref{eq:i2sbxt}). This alignment follows the empirical setup of prior work, as neither our method nor I2SB~\cite{liu20232} systematically investigates the effect of the stochastic term's magnitude on generation quality. A comprehensive analysis of this effect will be explored in future work.

The value of $\alpha$ in the mean term is theoretically constrained to the range $(0, 1)$, based on noise alignment considerations. We empirically determine its optimal value through systematic experiments, with quantitative results presented in Tab.~\ref{tab:alpha}. Considering the performance across all metrics, we select $\alpha=0.4$.

\section{Supplement the Experimental and Qualitative Results}
\subsection{Compare with PMRF}
We conducted a systematic comparison between the NADB and the Flow Matching-based PMRF~\cite{ohayon2024posterior} method  under three different types of noise, using identical network architectures and strictly adhering to the original hyperparameters.  As shown in Tab.~\ref{tab:pmrf}, the PMRF~\cite{ohayon2024posterior}  method exhibits instability at higher NFE values. This result underscores the advantage of the diffusion bridge approach and highlights the importance of our research into the endpoint fitting problem.

\begin{table}[htbp]
  \caption{Hyperparameter tuning using JPEG restoration tasks, QF=5.}
  \label{tab:alpha}
  \centering
  \begin{tabular}{@{}lccccc@{}}
    \toprule
    \multicolumn{6}{c}{\textbf{JPEG Restoration}} \\
    \midrule
    NFE&$\alpha$ & FID$\downarrow$&PSNR$\uparrow$& SSIM$\uparrow$&LPIPS$\downarrow$ \\
    \midrule
    \multirow{3}{*}{10} 
      & 0.3 & 8.2&24.60&0.70&0.30 \\ 
      & 0.4 & 6.9&24.45&0.69&0.30 \\
      & 0.5 & 7.1&24.44&0.69&0.30 \\
    \midrule
    \multirow{3}{*}{100}
      & 0.3 & 4.5&23.76&0.66&0.30 \\
      & 0.4 & 4.3&23.63&0.65&0.30 \\
      & 0.5 & 4.4&23.56&0.64&0.31 \\
    \bottomrule
  \end{tabular}
\end{table}

\subsection{Additional  Results}
Tab.~\ref{tab:uni-dreslsr} provides additional image restoration, image translation, and super-resolution  comparison experiments.
Fig.~\ref{fig:eightjpeg} to~\ref{fig:eightsr4x} provide additional qualitative results on each restoration tasks,  Fig.~\ref{fig:16e2h},~\ref{fig:16e2s} provide additional qualitative results on each translation tasks, and Fig.~\ref{fig:gauss_pool_duibi} provides additional  examples comparing between NADB and I2SB w.r.t. various NFE sampling.

\begin{table}[t]
  \caption{Comparisons with additional baselines.}
  \label{tab:uni-dreslsr}
  \centering
  \scriptsize

  \begin{minipage}{\linewidth}
  \centering
  \setlength{\tabcolsep}{2pt}
  \begin{tabular*}{\linewidth}{@{\extracolsep{\fill}}lccccc|lcccc}
    \toprule
    \multicolumn{6}{c|}{\textbf{(a) Deblurring, kernel=uniform}} 
    & \multicolumn{4}{c}{\textbf{(b) Edges2Handbags (256$\times$256)}} \\
    \midrule
    Method  & NFE & FID$\downarrow$ & PSNR$\uparrow$ & SSIM$\uparrow$ & LPIPS$\downarrow$ 
    &Method  & PSNR$\uparrow$ & SSIM$\uparrow$ & LPIPS$\downarrow$\\
    \midrule
    I3SB & \multirow{2}{*}{10}
         & 6.5 & 24.17 & 0.65 & 0.32 
         &NADB&19.25&0.740&0.231\\
    NADB &
         & 4.8 & 27.70 & 0.81 & 0.18 
         &RDBM&19.26&0.738&0.224\\
    \cmidrule{1-6}
    I3SB & \multirow{2}{*}{100}
         & 4.9 & 22.70 & 0.58 & 0.33 
         & GOUB& 16.58 & 0.700 & 0.288 \\
    NADB &
         & 3.4 & 27.39 & 0.80 & 0.17 
         & RDDM& 14.66 & 0.645 & 0.256 \\
    \bottomrule
  \end{tabular*}
  \end{minipage}

  \begin{minipage}{\linewidth}
  \centering
  \begin{tabular*}{\linewidth}{@{\extracolsep{\fill}}lccccc@{}}
     \toprule
      \multicolumn{6}{c}{\textbf{(c) Comparison on DrealSR dataset, NFE=40}} \\
      \midrule
     Method & LPIPS$\downarrow$ & MUSIQ$\uparrow$ & MANIQA$\uparrow$ & ClipIQA$\uparrow$ & LIQE$\uparrow$ \\
    \midrule
    NADB   & 0.315 & 71.480 & 0.483 & 0.576 & 3.610 \\
    DiT4SR & 0.365 & 64.950 & 0.627 & 0.548 & 3.964 \\
    \bottomrule
  \end{tabular*}
  \end{minipage}

\end{table}

\begin{figure*}[p]
    \centering
    \includegraphics[width=0.95\textwidth]{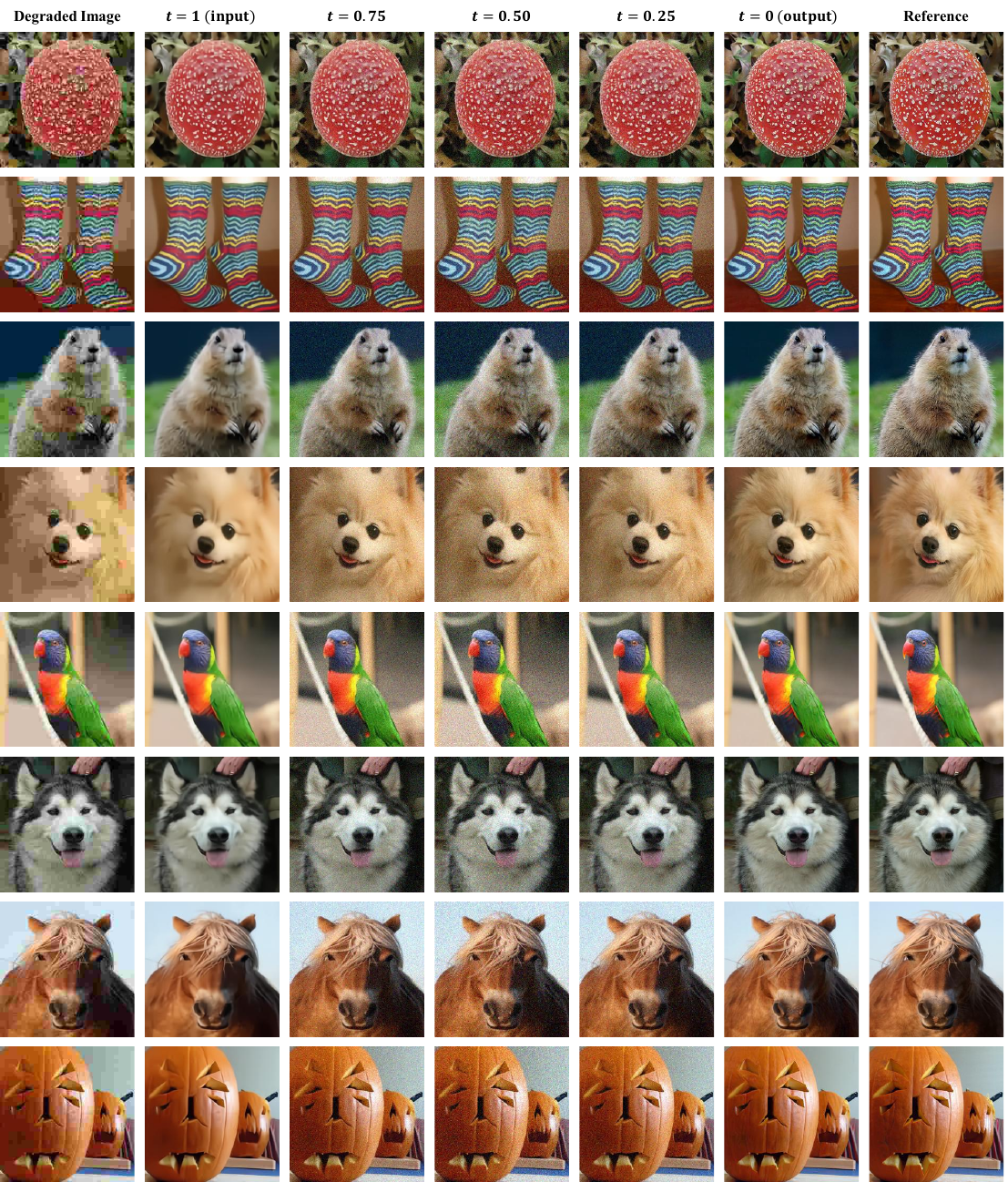}
    \caption{Generative processes of NADB on JPEG restoration tasks. \textbf{Top 4 rows}: QF=5. \textbf{Bottom 3 rows}: QF=10.}
    \label{fig:eightjpeg}
\end{figure*}
\begin{figure*}[p]
    \centering
    \includegraphics[width=0.95\textwidth]{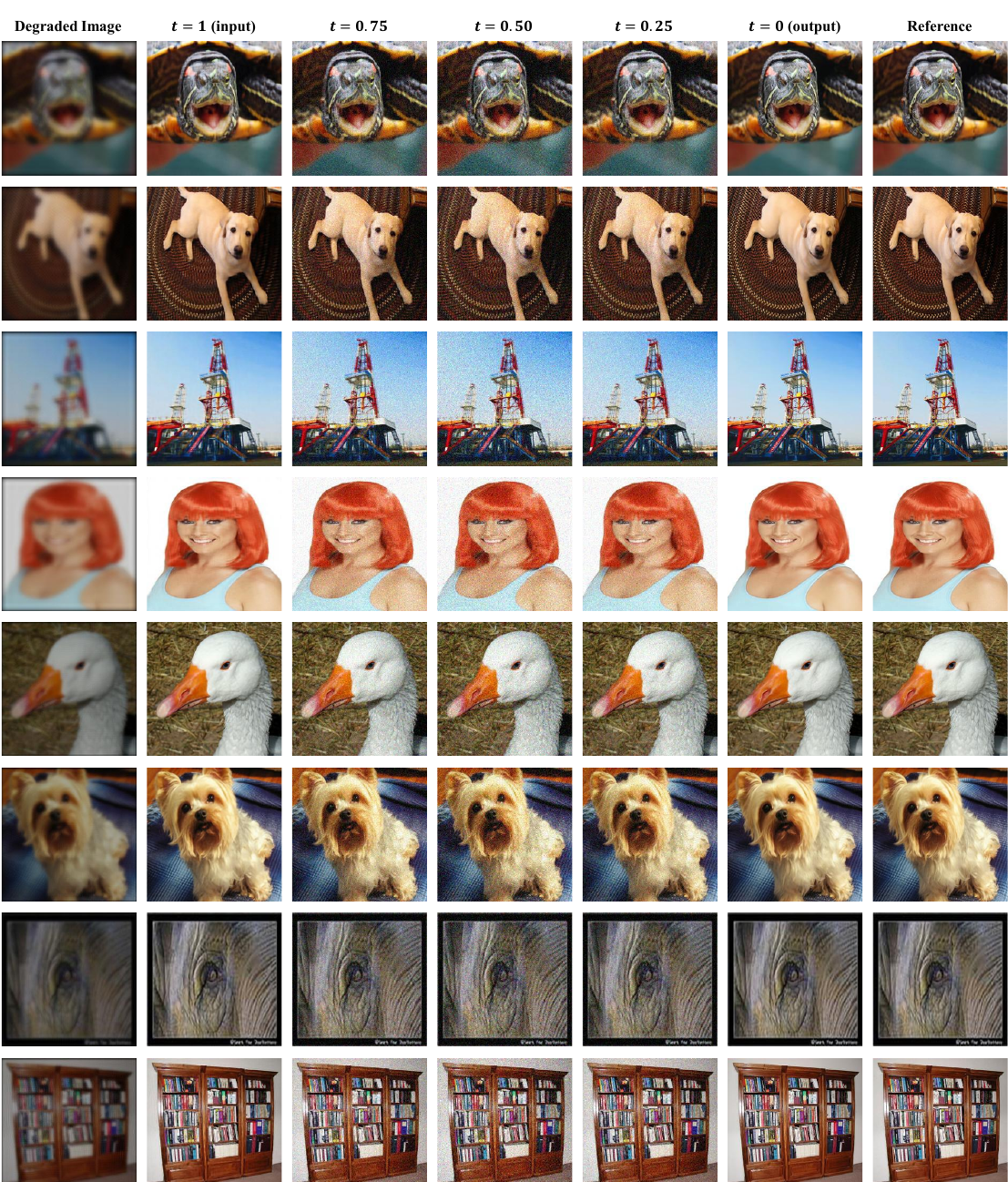}
    \caption{Generative processes of NADB on deblurring tasks. \textbf{Top 4 rows}: Uniform kernel. \textbf{Bottom 4 rows}: Gaussian kernel.}
    \label{fig:eightblur}
\end{figure*}
\begin{figure*}[p]
    \centering
    \includegraphics[width=0.95\textwidth]{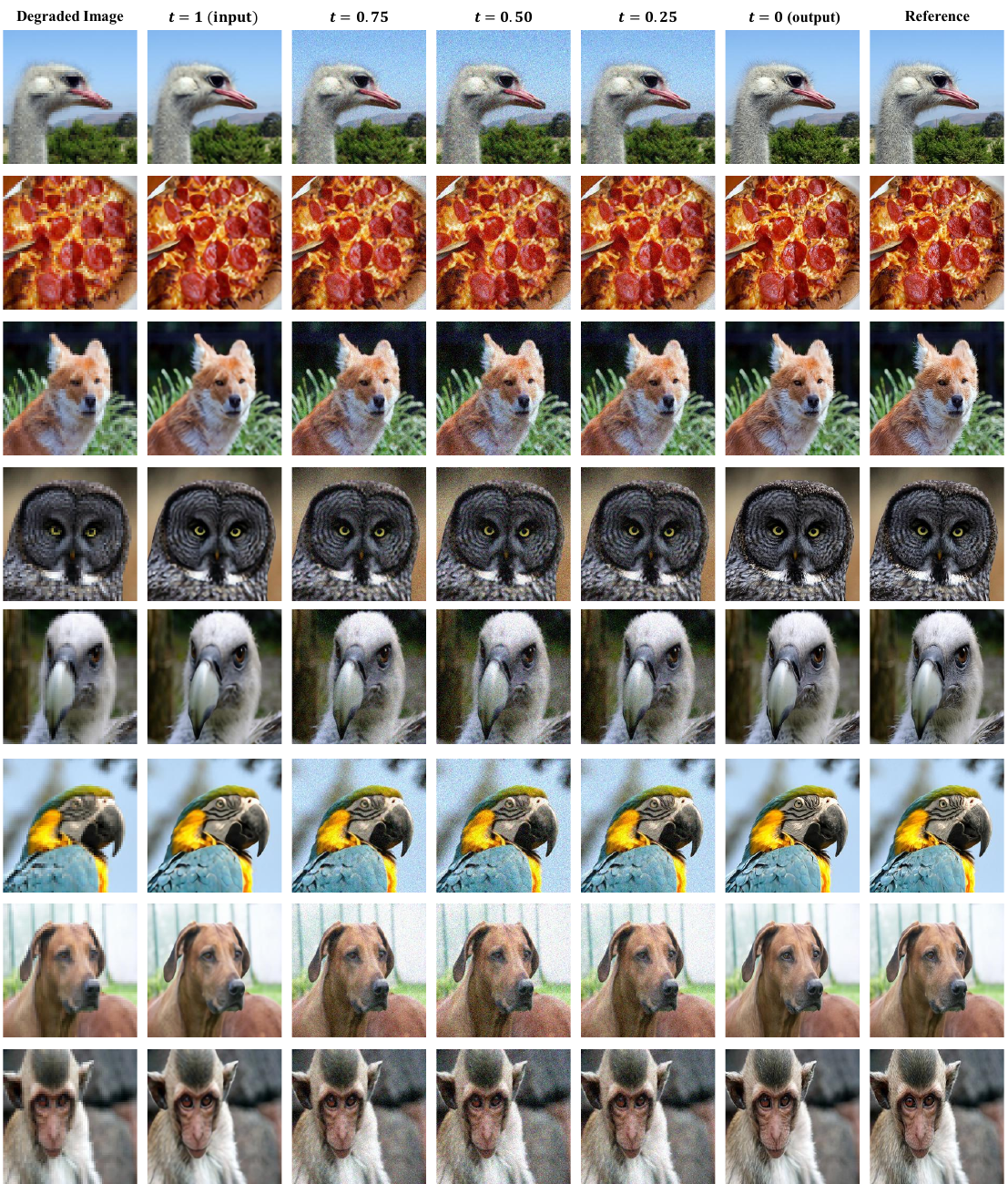}
    \caption{Generative processes of NADB on 4 $\times$ super-resolution tasks. \textbf{Top 4 rows}: Pool kernel. \textbf{Bottom 4 rows}: Bicubic kernel.}
    \label{fig:eightsr4x}
\end{figure*}

\begin{figure*}[p]
    \centering
    \includegraphics[width=0.95\textwidth]{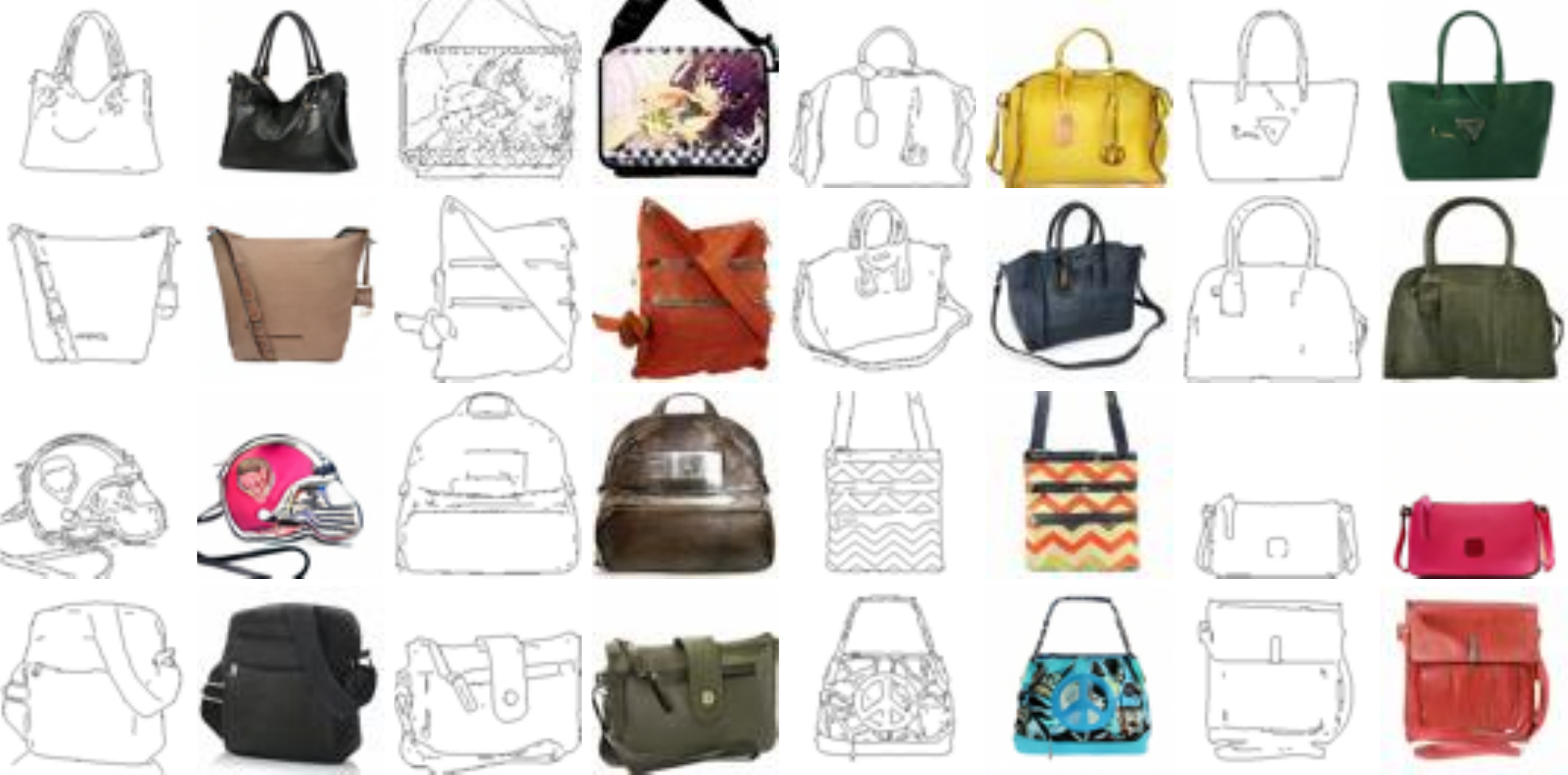}
    \caption{Generate results on edges2handbags dataset with resolution $64 \times 64$.}
    \label{fig:16e2h}
\end{figure*}
\begin{figure*}[p]
    \centering
    \includegraphics[width=0.95\textwidth]{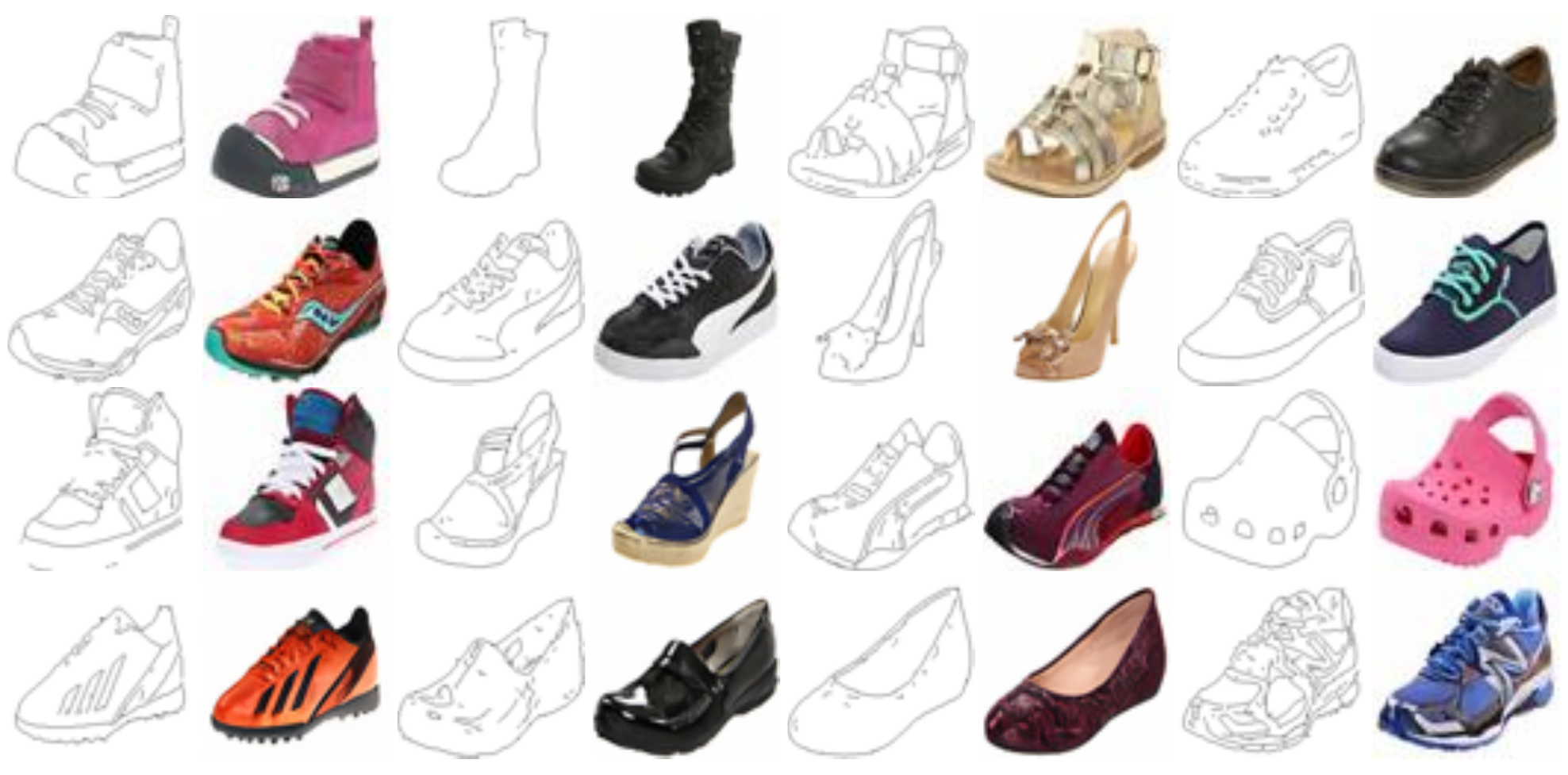}
    \caption{Generate results on edges2shoes dataset with resolution $64 \times 64$.}
    \label{fig:16e2s}
\end{figure*}

\begin{figure*}[p]
    \centering
    \includegraphics[width=0.89\textwidth]{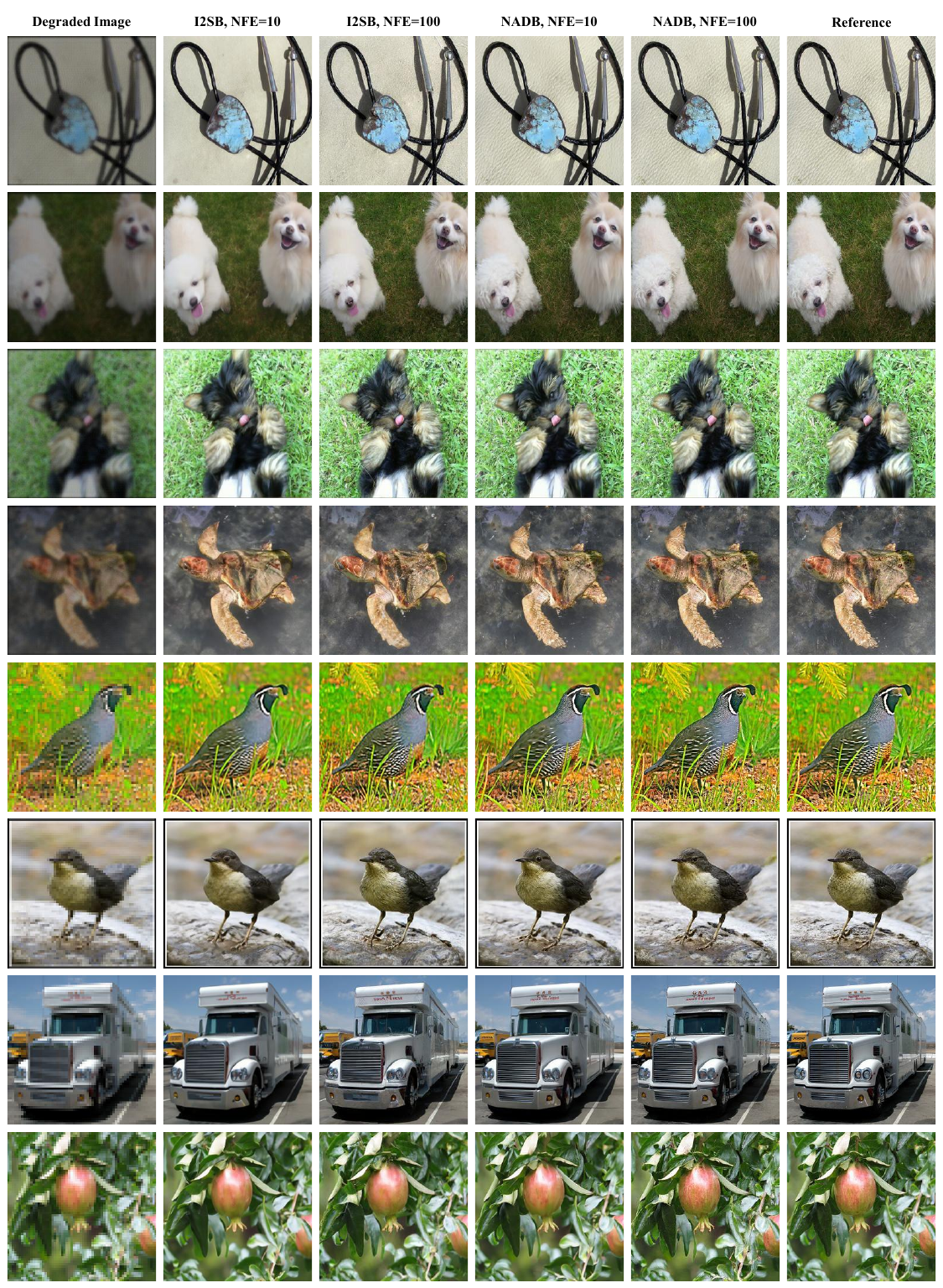}
    \caption{Additional qualitative comparison between NADB and I2SB. \textbf{Top 4 rows}: Deblurring tasks with gaussian kernel. \textbf{Bottom 4 rows}: 4 $\times$ super-resolution tasks with pool kernel.}
    \label{fig:gauss_pool_duibi}
\end{figure*}